%% file: main.tex
\documentclass[11pt]{article}

\usepackage[sort&compress,square,comma,authoryear]{natbib}
\usepackage[a4paper, total={6.1in, 10in}]{geometry}

\usepackage[utf8]{inputenc}
\usepackage[T1]{fontenc}
\usepackage{url}
\usepackage{booktabs}
\usepackage{tabularx}
\newcolumntype{Y}{>{\centering\arraybackslash}X}
\newcolumntype{Z}{>{\raggedleft\arraybackslash}X}
\usepackage[parfill]{parskip}
\usepackage{amsfonts}
\usepackage{nicefrac}
\usepackage{microtype}
\usepackage[table,dvipsnames]{xcolor}
\usepackage{amsmath,amssymb,amsthm}
\usepackage{bbm}                 
\usepackage{colortbl}
\usepackage{graphicx}
\graphicspath{{./figures/}}
\usepackage{mathrsfs,mathtools}
\usepackage{caption}
\usepackage{subcaption}          
\usepackage{wrapfig}
\usepackage{tikz}
\usetikzlibrary{calc, fadings, decorations, shapes, positioning, arrows}
\usepackage[graphicx]{realboxes}
\usepackage{enumitem}
\usepackage{algorithm}
\usepackage{algpseudocode}
\usepackage{makecell}

\usepackage{bm}
\usepackage{booktabs}
\usepackage{multirow}
\usepackage{comment}

\usepackage{hyperref}
\usepackage[capitalize]{cleveref}

\definecolor{salmon}{RGB}{234,153,153}
\definecolor{cornflowerblue}{RGB}{100,149,237}
\hypersetup{
    colorlinks,
    linkcolor={cornflowerblue},
    citecolor={cornflowerblue},
    urlcolor={salmon}
}

\theoremstyle{plain}
\newtheorem{theorem}{Theorem}
\newtheorem{proposition}[theorem]{Proposition}

\newtheorem{corollary}[theorem]{Corollary}
\theoremstyle{definition}

\theoremstyle{remark}

\title{Amortized Counterfactual Prediction with Graph Neural Networks}
\date{}

\usepackage{authblk}

\author[1]{Hendrik Stumpf\textsuperscript{\ensuremath{\dagger}}}
\author[2]{Francesco Quinzan\textsuperscript{\ensuremath{\dagger}}}

\affil[1]{Department of Computer Science,
University of Oxford}
\affil[2]{Department of Engineering, University of Oxford}
\affil[ ]{\textsuperscript{\ensuremath{\dagger}}Corresponding autohrs. Emails: hendrik.stumpf@cs.ox.ac.edu, francesco.quinzan@eng.ox.ac.uk}

\begin{document}

\maketitle

\begin{abstract}
\input{abstract}
\end{abstract}

\input{introduction}

\input{related_work}

\input{background}

\input{methodology}

\input{results}

\input{conclusion}

\subsection*{AI use statement}

In this work, we used generative AI tools for refining writing, formulating mathematical claims, assisting in the writing of proofs, and feedback on research methodology or experiments.
We have not used generative AI tools for generating synthetic data sets, helping develop theoretical models or conceptual frameworks, providing critical ingredients for proving mathematical claims, implementing methods, assisting with translation, cleaning and reformatting datasets, and interpreting results.
Support for qualitative and thematic data analysis and proposing or refining hypotheses are not applicable to this work.
We have reviewed all AI-assisted work. 
We checked that the generated formulation of the claim matches the intended claim. We take responsibility for the final content of this work,
including text, claims or artifacts produced with the aid of generative AI.

\subsection*{Ethics statement}

This work uses existing datasets and collects no new human-subject data. While counterfactual models may have applications in healthcare, their predictions can reflect biases and limitations in the data and should not be used for clinical decision-making without appropriate validation.

\subsection*{Reproducibility statement}

To support reproducibility, we provide the source code and experimental setup. The repository can be found under \hyperref[]{https://github.com/Zelos1/CFGNP}. Model architectures and hyperparameters are detailed in the appendix. 

\bibliographystyle{apalike}
\bibliography{references, other_refs}

\clearpage
\appendix

\section*{Appendix}

\input{appendix/appendix}

\end{document}

%% file: abstract.tex
Counterfactual inference asks how an observed outcome might have differed under different circumstances, such as whether a patient would have recovered had they received a different treatment. Such questions arise whenever decisions depend on the outcomes of alternatives that were not observed. Answering counterfactual queries generally requires knowledge of the underlying causal structure, which is often unavailable in practice. A common approach is therefore to first estimate the causal graph from observational data and then perform counterfactual inference. However, errors in the estimated causal graph can propagate to downstream counterfactual predictions. To address this challenge, we introduce the \textbf{Counterfactual Graph Neural Process (CFGNP)}, an end-to-end approach that answers counterfactual queries directly from observational data without first estimating a causal graph. CFGNP uses Graph Neural Networks (GNNs) to exploit the dual-graph structure by restricting cross-world message passing to corresponding factual--counterfactual variables.
Across synthetic and image-based benchmarks, CFGNP achieves stronger predictive performance than existing approaches, including on the German Loan benchmark, where it reduces MSE from $4.51 \times 10^{-2}$ to $0.19 \times 10^{-2}$. Furthermore, on synthetic causal graphs, a model trained only on 10-node graphs maintains strong predictive performance on graphs with up to 30 nodes, without retraining.

%% file: introduction.tex
\section{Introduction}
Understanding cause-and-effect relationships is a central goal in many scientific disciplines. In medicine, researchers may wish to determine whether a treatment improves patient outcomes \citep{Stovitz109, mapk_causal}. In economics, policymakers often aim to understand how changes in policy affect employment or inflation \citep{NBERw29787}. Similar questions arise in biology, social sciences, and machine learning \citep{morgan2007counterfactuals, sanderson2022mendelian, scholkopf2021toward}. Especially with the increased deployment of machine learning systems in settings where interpretability is crucial, understanding cause-and-effect relationships becomes more important \citep{needed_explainability}. In all of these settings, the objective is not merely to observe statistical associations, but to understand how outcomes would change if different actions or conditions were imposed.

Causal inference \citep{Pearl_2009} provides a framework for studying such questions. Unlike conventional statistical learning, causal inference seeks to uncover cause-effect relationships between variables, rather than statistical associations. The framework formalises this with the structural causal model (SCM), which specifies how variables are generated, and how they causally affect one another. Within the SCM framework, causal questions can be distinguished by the type of reasoning they require. Pearl \citep{Pearl_2009} differentiates between three levels: associational, interventional, and counterfactual queries. Associational queries concern relationships observed in data, while interventional queries ask how outcomes change when a variable is actively manipulated \citep{pearl1996causal}. Counterfactual queries go one step further by asking what would have happened in an alternative scenario, given what was observed in the factual world \citep{shpitser2012counterfactuals}.
Counterfactual queries are particularly important because they allow reasoning about individualized outcomes and explanations, rather than only population-level effects \citep{nguyen2020counterfactual}.
This makes them harder to solve than the other two query types as we additionally have to take the alternative hypothesis into account.

Traditional approaches to counterfactual inference typically rely on substantial prior knowledge about the underlying cause-effect relationships between variables \citep{xia_neural_2022, rasal_diffusion_2025, ribeiro_counterfactual_2025, nasr-esfahany_counterfactual_2023}. In practice, this often means assuming access to the \emph{causal graph}, which describes these relationships. Sometimes, less information, such as the topological node ordering of the causal graph, is required \citep{le_learning_2024}. However, in many real-world settings, such information is unavailable or highly uncertain. This is particularly the case in domains such as medicine, biology, and the social sciences, where causal relationships may be obscured by latent confounders, incomplete observations, and complex dependencies among variables \citep{weberpals2024principled, ruiz2024unfolding, loh2025causal}.

Hence, if there is no prior knowledge about the causal graph, we must first use observational samples to estimate the causal graph \citep{spirtes2001causation} and then use the resulting causal graph with the existing methods for counterfactual inference. However, this has two drawbacks; first, the search space of causal graphs scales super-exponentially in the number of variables \citep{zheng2018dagstearscontinuousoptimization} and second, choosing the wrong causal graph amplifies errors and may cause the counterfactual to be incorrect \citep{karimi2020algorithmicrecourseimperfectcausal}. 
To avoid explicitly estimating the causal graph, we instead condition the counterfactual prediction directly on observational samples. This allows the model to extract information about the underlying causal structure that is relevant to the counterfactual query, without committing to a single estimated graph. We approach this problem through meta-learning using Neural Processes (NPs) \citep{garnelo_neural_2018}, which learn to map a set of observations to a predictive distribution. Building on Graph Neural Processes \citep{carr2019graphneuralprocessesbayesian}, we further incorporate the structural constraints imposed by the dual graph into the prediction.

Based on this approach, we introduce the \emph{Counterfactual Graph Neural Process} (CFGNP), an end-to-end framework for counterfactual inference under unknown causal structure. An overview of our setting is shown in Figure~\ref{fig:abstract_pipeline}.

\noindent\textbf{Our contributions.}
\begin{itemize}
\item We introduce the \emph{Counterfactual Graph Neural Process} (CFGNP), an end-to-end framework for learning counterfactual distributions without prior knowledge of the underlying causal graph (Section~\ref{sec:method}).
\item We develop a Graph Neural Process architecture that conditions counterfactual predictions on observational samples (Section~\ref{sec:architecture}).
\item We demonstrate that CFGNP outperforms existing counterfactual inference methods on both synthetic and real-world data (Sections~\ref{sec:explicit-experiments} and~\ref{sec:image-experiments}).
\item We evaluate CFGNP on counterfactual queries involving features that are unseen during training and observe strong performance in this setting (Section~\ref{sec:generalization}).
\end{itemize}

\begin{figure}[t]
    \centering
    \includegraphics[width=\linewidth]{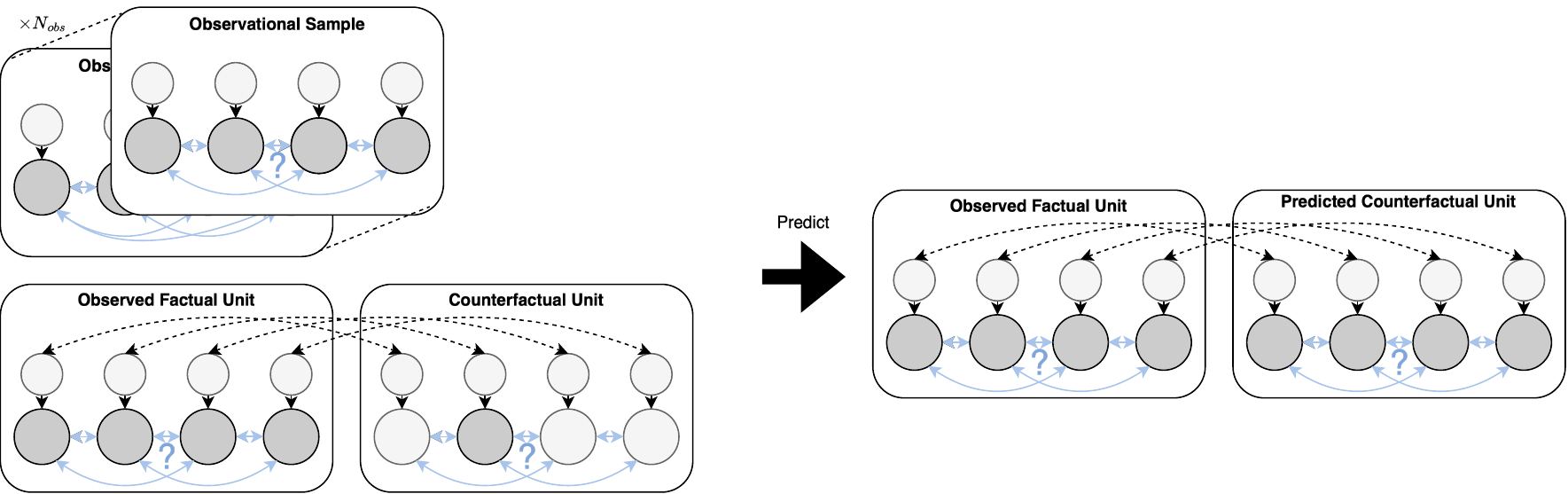}
    \caption{Overview of the task solved by CFGNP. Unknown variables are in light grey and known variables in dark grey. Blue edges represent missing knowledge about the causal graph. The counterfactual is predicted based on the counterfactual query and additional observational samples.}
    \label{fig:abstract_pipeline}
\end{figure}

%% file: related_work.tex
\section{Related Work}
\noindent\textbf{Causal Discovery.}
For causal inference tasks such as evaluating counterfactual queries, 
most existing methods rely on explicit knowledge of the causal graph. 
Estimating this graph from associational data is known as causal discovery \citep{verma2022equivalence}. 
The primary goal of causal discovery is to uncover true cause-and-effect relationships by learning the underlying structural dependencies rather than relying on statistical correlations alone \citep{Pearl_2009,peters2017elements}. Classical methodologies encompass constraint-based approaches, such as the PC algorithm \citep{spirtes2001causation}, and score-based frameworks, such as Greedy Equivalence Search \citep{chickering2002optimal}. More recently, modern techniques including LiNGAM \citep{shimizu06a}, NOTEARS \citep{zheng2018dagstearscontinuousoptimization}, and DAG-GNN \citep{pmlr-v97-yu19a} have advanced the field by introducing functional form assumptions, continuous optimization, and deep learning architectures. Nevertheless, key challenges remain, such as unobserved confounding \citep{pmlr-v202-kaltenpoth23a, pmlr-v213-diepen23a}, complex nonlinearities \citep{li2024nonlinear}, and computational scalability \citep{pmlr-v235-amin24a, cheng2024cuts+}. Addressing these challenges continues to make causal inference an active area of research.

\noindent\textbf{Counterfactual Inference.}
Existing approaches attempt to learn counterfactual distributions under varying degrees of prior structural knowledge \citep{khemakhem_causal_2021, karimi_algorithmic_2020, caron_counterfactual_2022}. For instance, \cite{le_learning_2024} and \cite{javaloy_causal_2023} use normalizing flows that require only a topological ordering of the graph nodes, whereas most alternatives assume the full causal graph. Within deep learning for counterfactual inference, \cite{pawlowski_deep_2020} use normalizing flows to parameterize structural causal models, while recent flow-matching approaches similarly require full graph specification \citep{ribeiro_counterfactual_2025}. Other methods explicitly exploit relational structure through Graph Neural Networks \citep{zecevic_relating_2021, sanchez-martin_vaca_2021}. This reliance on a known, estimated, or otherwise fixed causal structure can be restrictive \citep{wu_counterfactual_2025, caron_counterfactual_2022}: the space of DAGs grows super-exponentially with the number of variables \citep{zheng2018dagstearscontinuousoptimization}, and errors in the assumed causal structure can propagate to downstream counterfactual predictions \citep{karimi2020algorithmicrecourseimperfectcausal}. While end-to-end learning under uncertain causal structure has been explored for interventional inference \citep{geffner2022deependtoendcausalinference, dhir_estimating_2025}, an analogous approach has not yet been established for counterfactual inference, where additional structural constraints from the dual graph can be exploited \citep{von_kugelgen_backtracking_2022}.

\noindent\textbf{Image Counterfactuals.}
Several works focus specifically on image-based counterfactual generation and analysis \citep{pawlowski_deep_2020, rasal_diffusion_2025, luedtke_doublegen_2025, ma2026aialignmentmedicalimaging}. Among these, DeepBC \citep{kladny_deep_2024} performs counterfactual inference through latent exogenous variables under a known causal graph. Other recent approaches based on diffusion models \citep{rasal_diffusion_2025} and optimal transport \citep{ribeiro_counterfactual_2025} similarly assume that the causal graph is known. In contrast, VCI \citep{wu_counterfactual_2025} does not take an explicit causal graph as input, but instead assumes a prescribed treatment--covariate--outcome structure and uses a variational formulation to generate counterfactuals without requiring counterfactual training pairs.

\noindent\textbf{Neural Processes.}
To address these limitations, our approach builds on Neural Processes (NPs) \citep{garnelo_neural_2018}, which have been successfully applied across a range of domains including Bayesian optimization and other scientific applications \citep{kim2019attentive, foong2020meta, bruinsma2023iclr-autoregressive}. NPs have recently been explored in causal settings, including treatment-effect estimation and Bayesian causal inference \citep{bynum2026computationalidentifiability, dhir_estimating_2025}. However, these works address related causal inference tasks rather than counterfactual inference. In contrast, our approach extends this framework to counterfactual inference, for which we additional leveraging GNNs to capture the structural relationships between factual and counterfactual variables \citep{von_kugelgen_backtracking_2022}.
Graph Neural Processes extend NPs to relational and graph-structured data \citep{carr2019graphneuralprocessesbayesian,liang2021neuralprocessesimprovegraph,Hu_2023}. By combining these ideas, we directly map observational data to the target counterfactual distribution using GNNs over the dual-graph structure, without first estimating a causal graph.

%% file: background.tex
\section{Background}
We now introduce notions of causal inference, neural processes, and graph neural networks.

\noindent\textbf{Notation.}
Unstarred and starred variables denote the factual and counterfactual worlds, respectively. Let $z\in\mathbb{R}^{n\times d_{\mathrm{data}}}$ be the factual observed unit, $T^*=t^*$ a counterfactual condition, and $Y^*\subseteq V$ the target variables. The observational context is $O=\{O_i\}_{i=1}^{N_{\mathrm{obs}}}$, consisting of i.i.d.\ samples from the same SCM $\mathcal{M}$, where $n$ and $N_{\mathrm{obs}}$ denote the number of endogenous variables and context samples, respectively. We write $p(Y^*\mid O,z,T^*=t^*)$ when $\mathcal{M}$ is informed by the observational context $O$, and $p(Y^*\mid\mathcal{M},z,T^*=t^*)$ when emphasizing the model. If $\mathcal{M}$ is unspecified, we assume the true model.

\subsection{Structural Causal Models and Causal Inference}
\noindent\textbf{Structural Causal Model.}
A structural causal model (SCM) describes the causal relationships among a set of endogenous variables $V = \{V_1, \dots, V_n\}$ through a collection of structural equations $\mathbf{f} = \{\mathbf{f}_1, \dots, \mathbf{f}_n\}$. Each variable $V_k$ is determined by its direct causes $Pa_k \subseteq V \setminus {V_k}$ and an exogenous noise variable $U_k$ according to
\[
V_k = \mathbf{f}_k(Pa_k, U_k).
\]
The exogenous variables $U = {U_1, \dots, U_n}$ capture factors outside the model and are assumed to be mutually independent. The causal relationships encoded by the structural equations can be represented by a directed and acyclic causal graph $\mathcal{G}$, where an edge $V_j \rightarrow V_k$ indicates that $V_j \in Pa_k$.
In particular, with $U$ coming from a probability distribution we can define associational queries ($\mathcal{L}_1$) to concern distributions $p(Y)$ with $Y \subset V$. Observational samples $O_i$ are sampled from $p(V)$. Interventional queries ($\mathcal{L}_2$) concern questions $p(Y\mid \operatorname{do}(T=t))$ where we actively set nodes $T \subset V$ to a specific value and consider the outcome $Y \subset V$.
(Backtracking) counterfactual queries ($\mathcal{L}_3$) are probability distributions, with $T^*, Y^*, Z \subset V$ and $T^* \cap Y^* = \emptyset$, of the form
\[
P(Y^*=y^*\mid T^*=t^*,Z=z),
\]
concerning questions such as: given that $Z=z$ happened, what would have happened to variables $Y^*$ if alternatively $T^*=t^*$.
In particular, we use notation $X^*=x^*$ instead of the $\operatorname{do}$-operator to make clear that we are using backtracking counterfactuals \citep{von_kugelgen_backtracking_2022} and not interventionist counterfactuals \citep{Pearl_2009}. While interventionist counterfactuals artificially modify the structural
equations in the counterfactual world, they thereby locally violate the
laws of nature. In contrast, backtracking counterfactuals keep the
structural equations intact and instead consider alternative exogenous
conditions that could have produced $X^*=x^*$.

\begin{wrapfigure}{r}{0.34\textwidth}
    \centering
    \vspace{-1\baselineskip}
    \includegraphics[width=\linewidth]{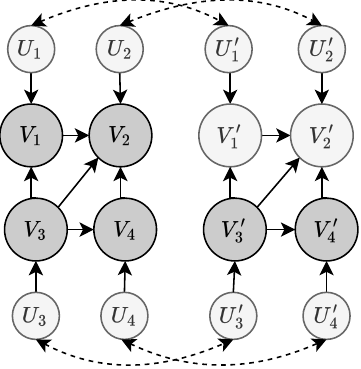}
    \caption{Example of the dual graph for a counterfactual query with a causal graph of four nodes. Light grey nodes are unknown.}
    \label{fig:causal_graph}
    \vspace{-1.5\baselineskip}
\end{wrapfigure}
\noindent\textbf{Dual Graph.}
To answer backtracking counterfactuals, we need the dual graph. Figure~\ref{fig:causal_graph} shows an example of the dual graph for
backtracking counterfactuals. The dual graph consists of two copies of
the same SCM. The left part, the observed factual unit, represents what
actually happened, whereas the copy on the right, the counterfactual
unit, represents the alternative circumstances. Both worlds are
connected via the exogenous nodes, which share a probability
distribution $p(U,U^*)$.

We can solve the counterfactual query via the
abduction--marginalisation--prediction framework \citep{von_kugelgen_backtracking_2022}, consisting of the
following steps:
\textbf{1.) Abduction:} We infer the joint posterior over the exogenous
nodes $P(U,U^* \mid Z=z,T^*=t^*)$.
\textbf{2.) Marginalisation:} We marginalise out $U$ to obtain
$P(U^* \mid Z=z,T^*=t^*)$.
\textbf{3.) Prediction:} We use the SCM to predict
$P(Y^* \mid Z=z,T^*=t^*)$. 

\noindent\textbf{Identifiability.}
When the underlying causal model is only partially known, a causal query
may not be uniquely determined by the available information. An
$\mathcal{L}_i$ query is identifiable from $\mathcal{L}_j$ information
if all models in the considered model class that agree on the available
$\mathcal{L}_j$ information yield the same answer \citep{Pearl_2009}.
This is relevant to our setting because counterfactual queries are not
necessarily identifiable from observational data alone.

\subsection{Neural Processes and Graph Neural Networks}
\noindent\textbf{Neural Processes.}
Neural Processes (NPs) \citep{garnelo_neural_2018} are probabilistic models that learn distributions over functions from a collection of related tasks. Each task is represented by a context set $C=\{(x_i,y_i)\}_{i=1}^{N_C}$ of observed input-output pairs and a target set $G=\{(x_j,y_j)\}_{j=1}^{N_G}$ at which predictions are made. NPs learn the conditional predictive distribution
$
    p(y_G \mid x_G, C),
$
thereby amortizing inference across tasks by mapping context observations directly to predictive distributions. Extensions of NPs to graph-structured domains replace neural architectures with graph-based models to capture relational structure \citep{carr2019graphneuralprocessesbayesian, ashman2024translation}.

\noindent\textbf{Graph Neural Networks.}
Graph Neural Networks (GNNs) learn representations of graph-structured data by iteratively propagating information between connected nodes. Let $h_i^{(k)}$ denote the representation of node $v_i$ at layer $k$, with $h_i^{(0)}=x_i$. Following the message-passing framework \citep{gilmer2017neural}, a GNN layer aggregates messages from the neighbourhood $\mathcal{N}(i)$ of each node,
\[
m_i^{(k)}
=
\operatorname{AGG}^{(k)}
\left(
\left\{
\phi^{(k)}(h_i^{(k)},h_j^{(k)},e_{ji})
\,\middle|\,
v_j \in \mathcal{N}(i)
\right\}
\right),
\qquad
h_i^{(k+1)}
=
\psi^{(k)}(h_i^{(k)},m_i^{(k)}),
\]
where $\phi^{(k)}$ and $\psi^{(k)}$ are learnable message and update functions, respectively, and $\operatorname{AGG}^{(k)}$ is permutation invariant. For directed graphs, the neighbourhood can be partitioned into incoming and outgoing neighbours, $\mathcal{N}^{-}(i)$ and $\mathcal{N}^{+}(i)$, allowing messages to be propagated separately along each direction. This distinction is particularly relevant for causal graphs, where incoming and outgoing edges represent causal relations in opposite directions. 

%% file: methodology.tex
\section{A GNN Model for Counterfactual Inference without Causal Graph Knowledge} \label{sec:method}
This work focuses on predicting counterfactual distributions directly from observational data without prior knowledge about the causal graph. For each task, we observe a context set
$O=\{z_i\}_{i=1}^{N_{\mathrm{obs}}}$ of i.i.d. observational samples from an unknown SCM $\mathcal{M}=(\mathcal{G},\mathbf{f})$. Given a factual observation $z$ and a counterfactual condition $T^*=t^*$, our goal is to predict the counterfactual distribution $p(Y^* \mid O,z,T^*=t^*)$ over a set of query variables $Y^*$.
The observational context $O$ provides evidence about $\mathcal{M}$. We aim to directly learn the model-averaged counterfactual predictive distribution
\begin{equation}
    p(Y^* \mid O,z,T^*=t^*)
    =
    \mathbb{E}_{\mathcal{M}\sim p(\mathcal{M}\mid O, z)}
    \left[
        p(Y^* \mid \mathcal{M},z,T^*=t^*)
    \right],
\end{equation}
thereby accounting for uncertainty over causal models without explicitly inferring or committing to a particular causal graph.
Existing methods on the other hand try to estimate the causal graph first.

Since we intend to approximate the counterfactual probability distribution, we use the Kullback-Leibler (KL) divergence \citep{kullback1951information} as the minimization objective.
Suppose we have a list $L$ of samples $(\tau,y^*)\sim q$, where $\tau=((z,T^*,t^*, Y^*),O)$ contains the factual representation $z$, counterfactual query, and observation pattern $O$, while $y^*$ contains the corresponding counterfactual target values. Let $p_\theta(y^*\mid\tau)$ denote the distribution predicted by the model with weights $\theta$, where $y^*$ may contain multiple target variables.

We then want to obtain the weights that minimize the KL divergence, hence
\[
    \begin{aligned}
        \theta^* &= \arg \min\limits_\theta \mathbb{E}_{\tau \sim q} \left[
        \text{KL}(p(Y^*=y^* | z, T^*=t^*)\  || \ p_\theta(Y^*=y^* | z, T^*=t^*, O))
        \right] \\
        &= \arg \max \limits_\theta \mathbb{E}_{\tau \sim q}\Big[\mathbb{E}_{y^*| \tau}[\log p_\theta(Y^*=y^* | z, T^*=t^*, O)]\Big] + C
    \end{aligned}
\]
for some constant $C$ independent of the model parameters. During training we assume access to counterfactual pairs, as we sample $y^* |\ \tau \sim p(Y^*=y^* | z, T^*=t^*, O)$.

\noindent\textbf{Recovery of the Counterfactual Distribution.}
In the limit of infinite data and model capacity, minimizing the KL divergence recovers the conditional predictive distribution \citep{bruinisma_thesis}
$
    p(Y^* \mid O,z,T^*=t^*)
$,
with latent quantities, such as the exogenous variables, implicitly marginalized out \citep{gordon2019metalearningprobabilisticinferenceprediction}. If the counterfactual distribution is identifiable from observational data, this predictive distribution coincides with the counterfactual distribution of the ground-truth causal model. Otherwise, the model instead recovers the model-averaged counterfactual distribution over SCMs compatible with the observations.

\subsection{Approach} \label{sec:architecture}
\begin{figure}[t]
\vspace{-5pt}
    \centering
    \includegraphics[width=\linewidth]{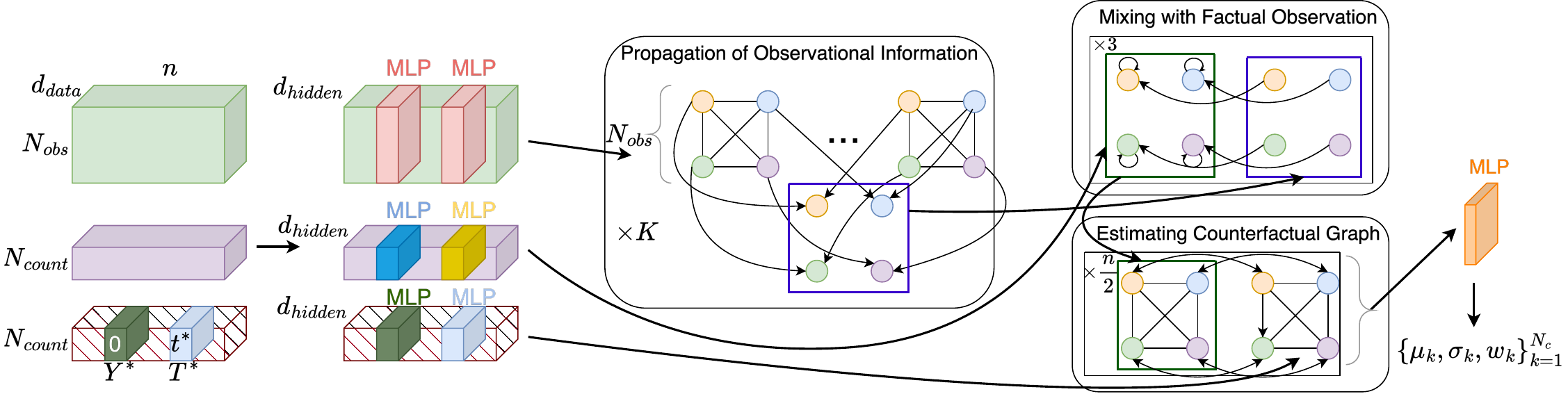}
    \vspace{-20pt}
    \caption{
    Overview of CFGNP. Observational samples are first aggregated into
    node-wise context representations, which are combined with the observed factual
    unit. Message passing between factual and counterfactual
    unit then produces parameters of the counterfactual
    predictive distribution.
    }
    \vspace{-15pt}
    \label{fig:model_architecture}
\end{figure}

Figure~\ref{fig:model_architecture} illustrates the architecture of the Counterfactual Graph Neural Process (CFGNP). Given observational samples $O$, a factual observation $z$, and a counterfactual intervention $x^*$, CFGNP predicts the corresponding counterfactual outcome distribution.
The architecture consists of three stages: summarizing the observational context (middle), combining it with the factual observation (top right), and propagating the resulting representation to the counterfactual unit (bottom right). We describe each component below.

\noindent\textbf{Embedding Input.}
We project all nodes to the hidden dimension $d_{\mathrm{hidden}}$ using node-wise MLPs. For $O$, we apply the same MLP $h_O$ across all nodes. For $z$ and $x^*$, we distinguish intervention nodes $T^*$ from the remaining nodes using separate MLPs: $h_{z,\mathrm{int}}$ and $h_{z,\mathrm{common}}$ for $z$, and $h_{x,\mathrm{int}}$ and $h_{x,\mathrm{common}}$ for $x^*$. In $x^*$, all entries outside $T^*$ are masked.

\noindent\textbf{Observational Context Encoding.}
We next incorporate the observational samples $O$ by allowing nodes within each sample $O_j$ to communicate and compressing their information into $q$. For each variable $i$, we introduce an aggregation node $q_i$, initialized to zero, and connect the corresponding node $i$ from every observational sample to $q_i$. Since we have no prior knowledge of the causal graph, we additionally fully connect the nodes within each observational sample.

The resulting graph consists of $N_{\mathrm{obs}}$ disjoint fully connected $n$-node subgraphs and $n$ aggregation nodes. A $K$-layer GAT \citep{brody2022attentivegraphattentionnetworks} performs message passing,
$
    [O',q'] = m_\theta^{(1)}([O,q]),
$
such that $q'$ captures the compressed observational context.

We then integrate this context with the factual unit $z$. To keep the graph compact, we connect each $q_i'$ only to its corresponding $z_i$ and add a self-loop to each $z_i$. A 3-layer GAT propagates the contextual information to the factual unit,
$
    z' = m_\theta^{(2)}(z,q').
$

\noindent\textbf{Predicting Counterfactual Distribution.}
The final stage aims to approximate the abduction--marginalisation--prediction steps using the context-informed factual representation $z'$ and counterfactual condition $x^*$. The dual graph enables these steps by coupling the factual and counterfactual worlds through connected exogenous nodes. Approximating these steps generally requires bidirectional message passing.

Because we do not have prior knowledge of the causal graph, factual and counterfactual units are fully connected. Thus, each endogenous node has direct access to the information needed to reproduce its corresponding exogenous state, allowing us to remove explicit exogenous nodes and connect only corresponding factual and counterfactual variables across worlds. In this case, a unidirectional GNN can approximate the corresponding bidirectional GNN arbitrarily well (App.~\ref{proposition_bidirectional}).

As a design choice, we apply $\min\{\lfloor n/2 \rfloor, 5\}$ GAT layers, scaling depth with graph size while capping it at five to limit over-smoothing \citep{li2018deeper}. This yields
$
[z'', h^*] = m_\theta^{(3)}(z',x^*)
$,
where $x^*$ encodes $t^*$ at conditioned nodes and masks the rest.

The resulting $h^*$ represents the counterfactual unit. To obtain the counterfactual predictive distribution, a node-wise MLP maps $h^*$ to its distributional parameters. We use a mixture of $N_c$ Gaussians, allowing CFGNP to represent uncertainty arising from ambiguity over the underlying SCM.

\noindent\textbf{Training Objective.}
During training CFGNP minimizes the negative-log-likelihood objective, as this is equivalent to minimizing the KL divergence \citep{murphy2022probabilistic}:
\[
\mathcal{L}_\theta = -\sum\limits_{i=1}^{\vert L \vert} \sum\limits_{j=1}^d \log \Big(\sum_{k=1}^{N_c} w_k(\tau_i)_j \mathcal{N}\left(y^*_{i, j} | \mu_k(\tau_i)_j,\sigma_k(\tau_i)_j\right)\Big).
\]

\noindent\textbf{Model Properties.}
The structure of CFGNP reflects two desirable properties. 
First, predictions should be independent of the ordering of variables. Permuting target nodes $Y^*$ should induce the same permutation in the corresponding predictions, while permutations of the remaining nodes $(Y^*)^c$ should leave predictions for $Y^*$ unchanged. This is provided by design of GNNs (App. \ref{ssec:permutation_proof}).

Second, a trained model should be applicable to graphs of different sizes and arbitrary numbers $N_{obs}$ of observational samples (App. \ref{app:graph_size_trans}). This enables CFGNP to generalize beyond the graph sizes and context sizes encountered during training.

We note for completeness that, for $n$ nodes, data/hidden dimensions $d_{\mathrm{data}}/d_{\mathrm{hidden}}$, $N_{\mathrm{obs}}$ observations, $K$ GNN iterations, and $N_C$ mixture components, one forward pass has complexity (\ref{app:complexity_proof}):
\[
\begin{aligned}
\Theta\!\Big(
& n\,d_{\mathrm{data}}d_{\mathrm{hidden}}N_{\mathrm{obs}}
+ KN_{\mathrm{obs}} n d_{\mathrm{hidden}}(d_{\mathrm{hidden}} + n) + nd_{\mathrm{hidden}}^2 + n^2 d_{\mathrm{hidden}}
+ n d_{\mathrm{hidden}} N_C
\Big)
\end{aligned}.
\]

%% file: results.tex
\section{Results}
We evaluate CFGNP in terms of counterfactual prediction performance, generalization across causal graph sizes and unseen features, and computational efficiency, with additional experiments assessing its ability to recover analytically available counterfactual distributions with increasing context size (App. \ref{sec:id_experiments}).
We also include an ablation study (App. \ref{app:transformer_ablation}) where a normal transformer is run on the concatenated inputs. The results show that CFGNP performs much better than this.
\noindent\textbf{Baselines.} We compare against the following baselines:
\begin{enumerate}[label={$\bullet$},itemsep=0pt,topsep=0pt,leftmargin=2em]
    \item \textbf{VCI} \citep{wu_counterfactual_2025}: uses a conditional VAE with a counterfactual variational objective under a restricted causal structure.
    \item \textbf{DeepBC} \citep{kladny_deep_2024}: performs counterfactual inference through exogenous variables in a learned SCM, assuming a known causal graph (CG).
    \item \textbf{VACA} \citep{sanchez-martin_vaca_2021}: uses a variational graph autoencoder on a known CG.
    \item \textbf{BGM} \citep{nasr-esfahany_counterfactual_2023}: represents structural equations as invertible mappings between exogenous and observed variables given the causal structure.
    \item \textbf{CFM} \citep{le_learning_2024}: learns flow-based causal mechanisms assuming a given causal ordering.
\end{enumerate}
Unlike the baseline methods, CFGNP uses counterfactual pairs during training, but no causal graph.

\noindent\textbf{Metrics.}
We evaluate counterfactual predictions using the following metrics:
\begin{enumerate}[label={$\bullet$},itemsep=0pt,topsep=0pt,leftmargin=2em]
    \item \textbf{MSE/MAE:} Mean squared/absolute error (MSE/MAE) for numerical variables.

    \item \textbf{Accuracy/ROC-AUC:} Accuracy and ROC-AUC for categorical variables.
\end{enumerate}
For datasets containing both numerical and categorical variables, we report metrics separately for each variable. For synthetic experiments with multiple variables, we follow \citet{sanchez-martin_vaca_2021} and report MSE aggregated across all variables and samples.

\noindent\textbf{Synthetic Datasets.} 
For the synthetic experiments, we use the Triangle, M-shape, and German Loan benchmarks from \citet{sanchez-martin_vaca_2021}, together with randomly generated SCMs for the graph-size experiments. Triangle and M-shape have 3 and 5 nodes respectively and are evaluated with linear (LIN), nonlinear additive (NLIN), and nonlinear non-additive (NADD) mechanisms, while German Loan is a 7-node semi-synthetic seven-variable SCM inspired by the German Credit domain. Detailed definitions can be found in \ref{app:synth_exp_gen}.

Further, we conduct experiments on causal graphs of increasing size where each sample comes from its own SCM. We follow \citet{soleymani2020causal}.
We generate the SCMs by randomly permuting the nodes into a topological order and adding each admissible edge $X_i \rightarrow X_j$ with probability $\frac{1}{2}$. Each variable $X$ is generated from its direct causes using one of seven mechanisms $f$ per sample, followed by posterior additive noise (App.~\ref{datageneration_sizeexp}). We consider causal graphs of sizes $\mathcal{S}={10,15,20,25,30}$.

\noindent\textbf{Real-world Datasets.}
For the real-world experiments, we use the CheXpert dataset \citep{irvin2019chexpertlargechestradiograph}, consisting of chest X-rays (App. \ref{app:chexpert_generation}). We augment each image with four explicit variables: age, sex, disease, and AP/PA, where AP/PA denotes the radiographic viewing direction. Following \citet{ribeiro_counterfactual_2025}, we supply the baselines with a causal structure in which age affects disease, while age, sex, disease, and AP/PA determine the image. Since ground-truth counterfactuals are unavailable \citep{ma2026aialignmentmedicalimaging}, we evaluate whether generated images exhibit the intended counterfactual attributes using classifiers. Details on the classifiers and image-generating CFGNP variant are provided in App.~\ref{app:chexpert_generation}. We assess whether images resemble real-world images qualitatively.

\subsection{Synthetic Experiments} \label{sec:explicit-experiments}
On synthetic experiments, we compare against VACA \citep{sanchez-martin_vaca_2021}, BGM \citep{nasr-esfahany_counterfactual_2023}, and CFM \citep{le_learning_2024}, which have demonstrated strong performance in prior work. VACA and BGM require the full causal graph, whereas CFM only requires its topological ordering, allowing comparison across different levels of causal knowledge. For the graph-size experiments, we omit CFM as it was not computationally viable on larger causal graphs.

\noindent\textbf{Counterfactual Prediction.}
Table \ref{tab:explicit_results} shows the results for the synthetic datasets. CFGNP improves across almost all variants of the datasets and in many by a substantial amount. For instance, for German loan, CFGNP has an MSE of $0.19 \times 10^{-2}$ compared to $4.51 \times 10^{-2}$ of CFM. However, on the LIN variant of M-Shape, the MSE of $3.60 \times 10^{-5}$ is higher than BGM's with $1.46 \times 10^{-5}$.
\begin{table}[h]
    \centering
    \resizebox{\linewidth}{!}{%
    \begin{tabular}{llcccc}
        \toprule
        \textbf{Dataset} & \textbf{Variant}
        & \textbf{BGM} & \textbf{VACA} & \textbf{CFM} & \textbf{CFGNP (Ours)} \\
        & & \multicolumn{4}{c}{\textbf{MSE} $\downarrow$} \\
        \hline

        \multirow{3}{*}{Triangle}
            & LIN
            & $(3.71 \pm 0.08) \times 10^{-3}$
            & $(23.52 \pm 1.83) \times 10^{-2}$
            & $(3.32 \pm 0.05) \times 10^{-3}$
            & $\mathbf{(2.59 \pm 2.13) \times 10^{-7}}$ \\

            & NLIN
            & $(4.78 \pm 0.97) \times 10^{-6}$
            & $(13.24 \pm 1.93) \times 10^{-2}$
            & $(1.35 \pm 0.24) \times 10^{-5}$
            & $\mathbf{(2.14 \pm 0.34) \times 10^{-7}}$ \\

            & NADD
            & $(1.80 \pm 1.80) \times 10^{-6}$
            & $(22.50 \pm 2.61) \times 10^{-2}$
            & $(1.54 \pm 0.40) \times 10^{-6}$
            & $\mathbf{(2.81 \pm 1.11) \times 10^{-7}}$ \\

        \hline
        \addlinespace

        \multirow{3}{*}{M-shape}
            & LIN
            & $\mathbf{(1.46 \pm 0.40) \times 10^{-5}}$
            & $(12.83 \pm 0.69) \times 10^{-2}$
            & $(2.73 \pm 0.17) \times 10^{-5}$
            & $(3.60 \pm 7.00) \times 10^{-5}$ \\

            & NLIN
            & $(1.97 \pm 0.50) \times 10^{-5}$
            & $(16.26 \pm 0.95) \times 10^{-2}$
            & $(4.67 \pm 1.14) \times 10^{-5}$
            & $\mathbf{(2.50 \pm 0.47) \times 10^{-6}}$ \\

            & NADD
            & $(1.97 \pm 0.07) \times 10^{-3}$
            & $(41.06 \pm 0.31) \times 10^{-2}$
            & $(1.89 \pm 0.03) \times 10^{-3}$
            & $\mathbf{(1.08 \pm 0.35) \times 10^{-4}}$ \\

        \hline
        \addlinespace

        German Loan
            & --
            & $(46.81 \pm 4.20) \times 10^{-2}$
            & $(10.30 \pm 0.40) \times 10^{-2}$
            & $(4.51 \pm 0.78) \times 10^{-2}$
            & $\mathbf{(0.19 \pm 0.09) \times 10^{-2}}$ \\

        \bottomrule
    \end{tabular}%
    }
    \caption{Mean MSE $\pm$ standard deviation achieved by BGM, VACA, CFM,
    and CFGNP (ours) across the Triangle, M-shape, and German Loan datasets.
    Lower values indicate better performance. Best scores are marked in bold.}
    \label{tab:explicit_results}
\vspace{-10pt}
\end{table}

\noindent\textbf{Graph-Size Experiments.}
For experiments with increasing graph size we use two variants of CFGNP. In one variant, CFGNP is retrained for each $s\in \mathcal{S}$, while in the other, CFGNP is trained only on graphs with $s=10$ nodes and applied to the other graph sizes without training. We refer to this model as $10$-CFGNP.
Additionally, as the benchmarks require a fixed causal graph, we provide the fullest DAG, connecting each node to all topological successors. This highlights an advantage of CFGNP: when causal graphs vary across the dataset, baselines must still assume a fixed graph.

Figure \ref{fig:size_experiment_plots} shows MSE with increasing graph size (note the non-linear y-axis). $10$-CFGNP performs similarly to retrained CFGNP for Linear and Sum-sine, but worse for Log-sum-exp and Sqrt-sum. MSE increases for all methods, but both CFGNP variants outperform the baselines and degrade more slowly. Other mechanisms are reported in App.~\ref{addition_size_results}.
\begin{figure}[h]
    \vspace{-10pt} 
    \centering
    \begin{subfigure}{0.19\textwidth}
        \centering
        \includegraphics[width=\linewidth]
        {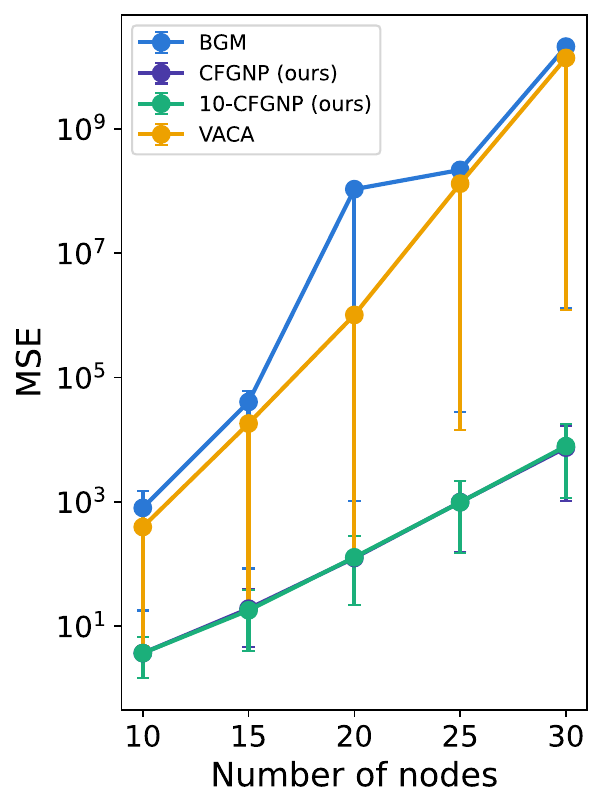}
        \label{fig:linear_quantile}
    \end{subfigure}
    \begin{subfigure}{0.19\textwidth}
        \centering
        \includegraphics[width=\linewidth]
        {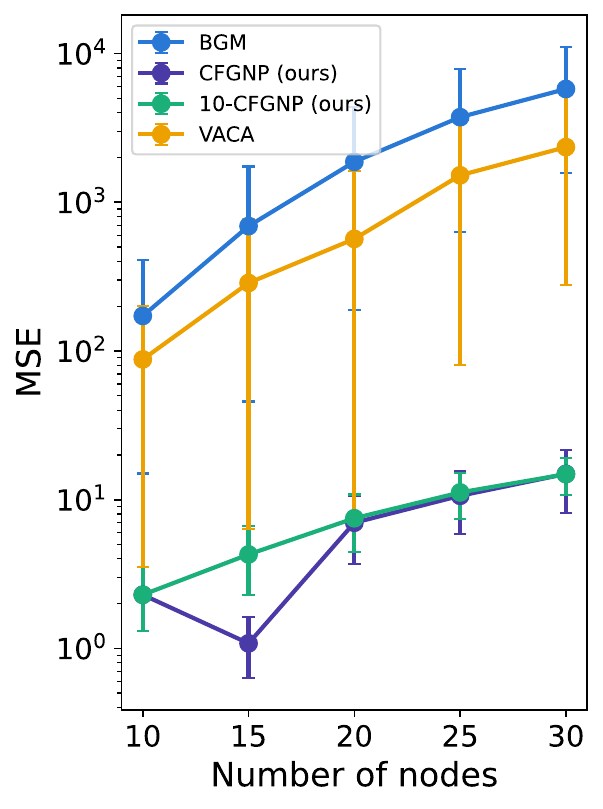}
        \label{fig:sum_sine_quantile}
    \end{subfigure}
    \begin{subfigure}{0.19\textwidth}
        \centering
        \includegraphics[width=\linewidth]
        {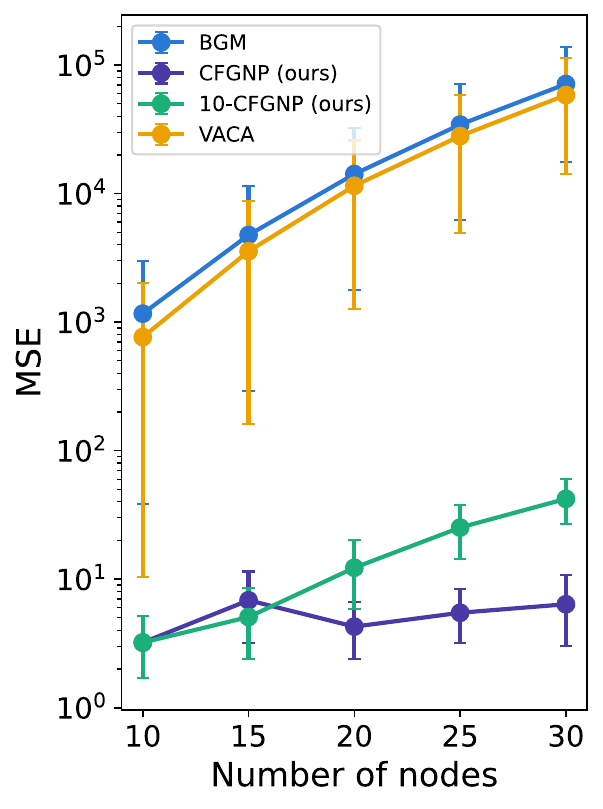}
        \label{fig:logsumexp_quantile}
    \end{subfigure}
    \begin{subfigure}{0.19\textwidth}
        \centering
    \includegraphics[width=\linewidth]
    {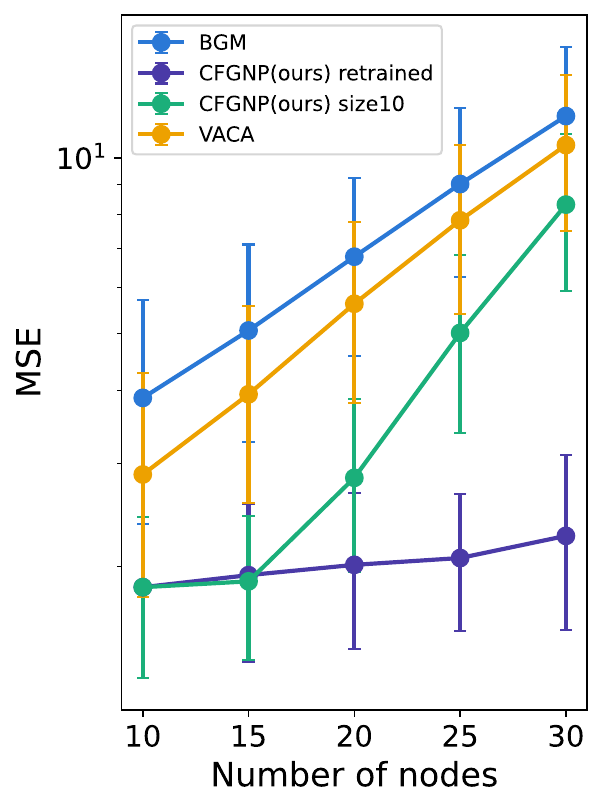}
        \label{fig:sqrt_sum_quantile}
    \end{subfigure}
    \begin{subfigure}{0.19\textwidth}
        \centering
    \includegraphics[width=\linewidth]
    {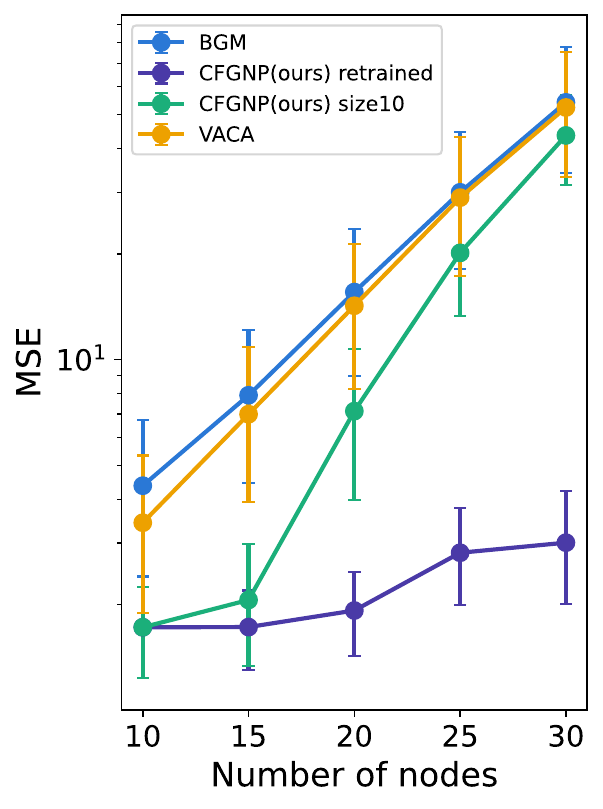}
        \label{fig:sum_sqrt_quantile}
    \end{subfigure}
    
\vspace{-20pt} 
    \caption{Performance (MSE) of models with regard to size of the causal graph. Models are compared on causal graphs with Linear (1.), Sum-sine (2.), Log-sum-exp (3.), Sqrt-sum (4.), and Sum-sqrt (5.) structural equations. MSE mean over distinct causal graphs. Vertical lines show $25^{th}$ and $75^{th}$ quantiles.}
    \label{fig:size_experiment_plots}
\vspace{-10pt}
\end{figure}

\subsection{CheXpert Experiments}\label{sec:image-experiments}
We compare against VCI \citep{wu_counterfactual_2025} and DeepBC \citep{kladny_deep_2024} which both have previously demonstrated strong performance. Again, VCI assumes treatment–covariate–outcome structure while DeepBC needs full graph knowledge.

\noindent\textbf{Counterfactual Prediction.}
Table \ref{tab:chexpert_results} shows that CFGNP performs particularly well on age, reducing MAE from $10.88$ for DeepBC to $2.71$, and consistently achieves high ROC-AUC on categorical variables. For disease, for example, accuracies are around $0.87$, while CFGNP achieves a ROC-AUC of $0.77$, compared with $0.75$ for DeepBC and $0.45$ for VCI. Generated counterfactual images are provided in Appendix~\ref{addition_size_results}.

\begin{table}[h]
    \centering
    \resizebox{0.7\linewidth}{!}{%
    \begin{tabular}{llccc}
        \toprule
        \textbf{Feature} & \textbf{Metric}
        & \textbf{CFGNP (Ours)}
        & \textbf{DeepBC}
        & \textbf{VCI} \\
        \midrule

        \multirow{2}{*}{Sex}
            & Accuracy $\uparrow$ & $\mathbf{0.8452 \pm 0.0608}$ & $0.8222 \pm 0.1220$ & $0.5173 \pm 0.0296$ \\
            & ROC-AUC $\uparrow$  & $0.9155 \pm 0.0540$ & $\mathbf{0.9204 \pm 0.0752}$ & $0.5238 \pm 0.0122$ \\

        \addlinespace

        Age
            & MAE $\downarrow$     & $\mathbf{2.7136 \pm 0.7326}$ & $10.8750 \pm 0.8732$ & $66.8172 \pm 62.1769$ \\

        \addlinespace

        \multirow{2}{*}{AP/PA}
            & Accuracy $\uparrow$ & $\mathbf{0.9664 \pm 0.0140}$ & $0.9572 \pm 0.0552$ & $0.8597 \pm 0.0000$ \\
            & ROC-AUC $\uparrow$  & $\mathbf{0.9787 \pm 0.0136}$ & $0.9658 \pm 0.0584$ & $0.5741 \pm 0.0342$ \\

        \addlinespace

        \multirow{2}{*}{Disease}
            & Accuracy $\uparrow$ & $\mathbf{0.8830 \pm 0.0368}$ & $0.8720 \pm 0.0240$ & $0.8548 \pm 0.0667$ \\
            & ROC-AUC $\uparrow$  & $\mathbf{0.7710 \pm 0.0572}$ & $0.7517 \pm 0.0327$ & $0.4516 \pm 0.0202$ \\

        \bottomrule
    \end{tabular}
    }
    \caption{Performance of CFGNP, VCI, and DeepBC on the CheXpert dataset.
    MAE is reported for the continuous feature,
    while accuracy and ROC-AUC are reported for categorical features.
    Arrows indicate whether higher or lower values are preferred. Best values marked in bold.}
    \label{tab:chexpert_results}
\end{table}

\noindent\textbf{Out-of-Distribution Variable-Set Experiment.} \label{sec:generalization}
We evaluate CFGNP's generalization to unseen features on CheXpert. For each OOD setting, we exclude one feature $k_{\mathrm{OOD}}$ during training and evaluate on the full test set, leveraging CFGNP's ability to handle graphs of different sizes.
Table~\ref{tab:chexpert_ood_full_results} shows strong generalization for sex and AP/PA, with ROC-AUC above $0.99$, while age and disease are more challenging, with an MAE of up to $11.53$ and ROC-AUC between $0.61$ and $0.64$, respectively.

\begin{table}[h]
    \centering
    \resizebox{0.9\linewidth}{!}{%
    \begin{tabular}{llcccc}
        \toprule
        \textbf{Feature} & \textbf{Metric}
        & \textbf{OOD Sex}
        & \textbf{OOD Age}
        & \textbf{OOD AP/PA}
        & \textbf{OOD Disease}\\
        \midrule

        \multirow{2}{*}{Sex}
            & Accuracy $\uparrow$ 
            & $0.9516 \pm 0.0108$ 
            & $0.9634 \pm 0.0157$ 
            & $\mathbf{0.9692 \pm 0.0084}$ 
            & $0.9627 \pm 0.0029$ \\
            & ROC-AUC $\uparrow$  
            & $0.9927 \pm 0.0037$ 
            & $0.9941 \pm 0.0039$ 
            & $\mathbf{0.9965 \pm 0.0017}$ 
            & $0.9964 \pm 0.0005$ \\

        \addlinespace

        Age
            & MAE $\downarrow$    
            & $10.7037 \pm 1.5283$ 
            & $11.5291 \pm 0.0950$ 
            & $9.2965 \pm 1.9205$ 
            & $\mathbf{8.1875 \pm 0.4218}$ \\

        \addlinespace

        \multirow{2}{*}{AP/PA}
            & Accuracy $\uparrow$ 
            & $0.9917 \pm 0.0069$ 
            & $0.9904 \pm 0.0085$ 
            & $0.9995 \pm 0.0008$ 
            & $\mathbf{0.9996 \pm 0.0005}$ \\
            & ROC-AUC $\uparrow$  
            & $0.9998 \pm 0.0004$ 
            & $0.9998 \pm 0.0003$ 
            & $\mathbf{1.0000 \pm 0.0000}$ 
            & $\mathbf{1.0000 \pm 0.0000}$ \\

        \addlinespace

        \multirow{2}{*}{Disease}
            & Accuracy $\uparrow$ 
            & $0.6560 \pm 0.0176$ 
            & $0.6678 \pm 0.0119$ 
            & $\mathbf{0.6690 \pm 0.0073}$ 
            & $0.6579 \pm 0.0212$ \\
            & ROC-AUC $\uparrow$  
            & $\mathbf{0.6378 \pm 0.0068}$ 
            & $0.6296 \pm 0.0163$ 
            & $0.6058 \pm 0.0264$ 
            & $0.6155 \pm 0.0197$ \\

        \bottomrule
    \end{tabular}
    }
    \caption{OOD generalization on CheXpert. Each column indicates the feature excluded during training; evaluation is performed on the full test set containing all features. Best results in bold.}
    \label{tab:chexpert_ood_full_results}
    \vspace{-10pt}
\end{table}

%% file: conclusion.tex
\section{Conclusion}
We propose CFGNP, an end-to-end architecture designed to learn counterfactual distributions directly from observational data without knowledge of the causal graph. CFGNP applies GNNs to enforce structural constraints on the dual graph implied by the notion of backtracking counterfactuals. By leveraging observational context samples alongside counterfactual queries, CFGNP directly approximates the Bayesian model-averaged counterfactual distribution.
Evaluations on image and synthetic counterfactual queries have shown strong performance compared to baselines. 
Further experiments on synthetic datasets show that CFGNP retains strong performance on larger causal graphs compared with retraining from scratch, while outperforming baselines.
Finally, we empirically validate that CFGNP performs adequately even on out-of-distribution features. 
In terms of limitations, the preceding capabilities come with a computational trade-off. The required observational context increases both training and inference costs.
Moreover, if the counterfactual distribution is non-identifiable, CFGNP cannot recover the true distribution. However, this also applies to the baseline methods.
In the future, one could incorporate priors about the causal graph. While the entire causal graph may not be known, an expert may have knowledge about the existence and absence of certain causal relationships. Taking this even further, one could provide simple distributions over edges, for instance obtained by amortized causal structure learning methods \citep{lorch2022amortizedinferencecausalstructure}.

%% file: appendix/appendix.tex
\input{appendix/proof_appendix}

\input{appendix/extended_model_architecture}

\input{appendix/data_generation}

\input{appendix/additional_results}

%% file: appendix/proof_appendix.tex
\input{appendix/dual_model_appendix}

\input{appendix/time_complexity_appendix}

\input{appendix/permutation_appendix}

\input{appendix/graph_size_appendix}

%% file: appendix/dual_model_appendix.tex
\subsection{Equivalence of a Direct MPNN to a Non-directional MPNN on the Dual Graph}
This section provides an expressivity result justifying the removal of explicit exogenous nodes from the dual-graph architecture.

\begin{proposition}[Expressive sufficiency after eliminating exogenous nodes]\label{proposition_bidirectional}
Let $G_{\mathrm{exo}}$ denote the dual graph containing explicit exogenous nodes, and let
$G_{\mathrm{dir}}$ denote the reduced graph obtained by removing the exogenous nodes and directly
connecting each factual--counterfactual pair $(x_i, x_i^\ast)$.

Assume that the endogenous subgraph of $G_{\mathrm{dir}}$ is fully connected. Then, for every
function of the endogenous nodes that can be approximated arbitrarily well by a bidirectional MPNN
on $G_{\mathrm{exo}}$, there exists a direct MPNN on $G_{\mathrm{dir}}$, with sufficiently large hidden
dimension, that approximates the same function arbitrarily well.
\end{proposition}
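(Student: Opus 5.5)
We prove the statement by \emph{simulation}: every bidirectional MPNN on $G_{\mathrm{exo}}$ will be emulated by a direct MPNN on $G_{\mathrm{dir}}$ that carries, inside each endogenous node's hidden state, enough information to reconstruct the states of the exogenous nodes it would have exchanged messages with. Since the target function only depends on endogenous outputs, it suffices to show that the final endogenous representations of the bidirectional MPNN are a continuous function of quantities the direct MPNN can compute. Approximation then follows from universal approximation of the MLP message/update functions (on a compact input domain) and composition of finitely many uniformly continuous maps.

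\textbf{Step 1 (locality of exogenous nodes).} In $G_{\mathrm{exo}}$, the exogenous pair $(U_i,U_i^\ast)$, or the single shared exogenous node, is adjacent only to $x_i$ and $x_i^\ast$ (and to each other). Hence after layer $k$ the exogenous state $h^{(k)}_{U_i}$ is a fixed continuous function of the histories $(h^{(0..k-1)}_{x_i},h^{(0..k-1)}_{x_i^\ast})$ and its initialization, which is a constant or learned embedding. We will formalize this by induction on $k$.

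\textbf{Step 2 (augmented state and layer emulation).} We let each endogenous node of $G_{\mathrm{dir}}$ store the tuple
\[
\tilde h_v^{(k)}=\bigl(h_v^{(k)},\,h^{(k)}_{U(v)},\,\text{full history of }h_v\bigr),
\]
which requires the enlarged hidden dimension. Under the direct edge $x_i\leftrightarrow x_i^\ast$, node $x_i$ receives $\tilde h_{x_i^\ast}^{(k)}$, so it can recompute $h^{(k+1)}_{U_i}$ exactly as in Step 1. Messages that in $G_{\mathrm{exo}}$ travel along endogenous edges (in either direction) must also be reproduced. Full connectivity of the endogenous subgraph guarantees that every $\mathcal N^{-}(v)\cup\mathcal N^{+}(v)$ neighbour is a direct neighbour. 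We will append a fixed one-hot "edge-type/direction" feature to messages, or choose the direct layer to use edge attributes $e_{ji}$. This lets the direct layer's message function reproduce the separate incoming and outgoing aggregations. A single direct layer thus realizes one bidirectional layer on the augmented states. One caveat is that the statement's "direct MPNN" may not see orientation; if needed, we handle this by encoding node identities or the factual/counterfactual flag in features, as done by the separate embeddings $h_{z,\cdot},h_{x,\cdot}$.

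\textbf{Step 3 (approximation).} The exact emulation gives a direct architecture whose layer maps are continuous, which we then replace by MLPs. Inputs lie in a compact set, so standard uniform approximation controls the error per layer, and a telescoping argument using uniform continuity of later layers bounds the accumulated error by any $\varepsilon$. A sum aggregator with injective-style encoding handles permutation-invariant aggregation.

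The main obstacle is Step 2's reproduction of the bidirectional aggregation on $G_{\mathrm{exo}}$ when the aggregator is non-sum (e.g.\ attention in a GAT). There, the weights over neighbourhoods differ between the two graphs because the direct graph has extra neighbours. We will first try to make the emulating network's messages from "non-edges" of $G_{\mathrm{exo}}$ vanish via edge features, and then argue that attention softmax can approximate masked aggregation by driving those logits to $-\infty$, which is achievable only approximately. This is why we claim approximation rather than exact equality.
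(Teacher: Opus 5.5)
Your proposal is correct and follows essentially the same route as the paper: store each exogenous hidden state inside its endogenous partner (enlarging the hidden dimension), use the direct $x_i$--$x_i^\ast$ edge to supply $h_{u_i^\ast}$ for the exogenous update, rely on full connectivity for the endogenous messages, and induct over layers with universal approximation, reading out only on endogenous nodes. Your treatment of direction and true-neighbour identification (edge features, node identities, or a world flag), and of error accumulation across layers, makes explicit points the paper leaves implicit; storing the full history is harmless but unnecessary, since the current exogenous state suffices.
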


\begin{proof}

A bidirectional MPNN layer on $G_{\mathrm{exo}}$ has the form

\[
h_v^{(\ell)}
=
\Phi_\ell\!\left(
h_v^{(\ell-1)},
\sum_{(w,v)\in E}
M_\ell^{\rightarrow}
(h_v^{(\ell-1)},h_w^{(\ell-1)}),
\sum_{(v,w)\in E}
M_\ell^{\leftarrow}
(h_v^{(\ell-1)},h_w^{(\ell-1)})
\right)
\]
where $M_\ell^{\rightarrow}$ and $ M_\ell^{\leftarrow}$ are neural networks handling incoming and outcoming edges respectively.

Since the values of the exogenous variables are not observed, the initial exogenous embeddings are fixed constants,

\[
h_{u_i}^{(0)}=c_U,
\qquad
h_{u_i^*}^{(0)}=c_{U^*}.
\]

Consequently, the hidden state of every exogenous node is completely determined by the hidden states of neighbouring endogenous and exogenous nodes. 
$G_{dir}$ does not have exogenous nodes, so we model them by doubling the hidden dimension of the endogenous nodes.
Instead of explicitly representing $u_i$ and $u_i^*$, the direct MPNN stores their hidden states inside the corresponding endogenous representations,

\[
\widetilde h_{x_i}^{(\ell)}
=
\bigl(h_{x_i}^{(\ell)},h_{u_i}^{(\ell)}\bigr),
\qquad
\widetilde h_{x_i^*}^{(\ell)}
=
\bigl(h_{x_i^*}^{(\ell)},h_{u_i^*}^{(\ell)}\bigr).
\]

Assume inductively that these augmented representations reproduce the hidden states of the bidirectional MPNN after layer $\ell-1$.

The bidirectional update of an exogenous node is a continuous function

\[
h_{u_i}^{(\ell)}
=
F_{\ell,i}
\left(
h_{u_i}^{(\ell-1)},
h_{u_i^*}^{(\ell-1)},
h_{x_i}^{(\ell-1)}
\right),
\]

and analogously for $u_i^*$. Since the endogenous graph is fully connected, node $x_i$ receives the representation of $x_i^*$ through an ordinary directed message. Hence, all arguments of $F_{\ell,i}$ are available locally in the augmented representations. By the universal approximation theorem \citep{hornik1989multilayer}, an MLP can approximate $F_{\ell,i}$ arbitrarily well, allowing the direct MPNN to update the virtual exogenous states.

Next, consider the update of an endogenous node,

\[
h_{x_i}^{(\ell)}
=
G_{\ell,i}
\left(
h_{x_i}^{(\ell-1)},
\{h_y^{(\ell-1)}:y\in N_{\mathrm{end}}(x_i)\},
h_{u_i}^{(\ell-1)}
\right).
\]

The virtual state $h_{u_i}^{(\ell-1)}$ is stored locally in
$\widetilde h_{x_i}^{(\ell-1)}$. Because the endogenous graph is fully connected, node $x_i$ receives the hidden state of every endogenous node. Hence, all arguments of $G_{\ell,i}$ are available to the direct MPNN. Again, the update function can be approximated arbitrarily well by an MLP.

After each layer,

\[
\widetilde h_{x_i}^{(\ell)}
=
\bigl(
h_{x_i}^{(\ell)},
h_{u_i}^{(\ell)}
\bigr),
\]

and similarly for $x_i^*$. By induction, the hidden representations of all endogenous nodes produced by the bidirectional MPNN can be reproduced arbitrarily closely by the direct MPNN.

Since the readout is performed only on endogenous nodes, reproducing the exogenous hidden states explicitly is unnecessary.

\end{proof}

%% file: appendix/time_complexity_appendix.tex
\subsection{CFGNP Complexity Analysis}\label{app:complexity_proof}

We calculate the complexity in each part of the module. We first note that the GAT layer has computational complexity $\Theta(|\mathcal{V}|d d' + |\mathcal{E}|d')$ for a graph with $|\mathcal{V}|$ nodes and $|\mathcal{E}|$ edges and the hidden dimensions go from $d$ to $d'$ \citep{brody2021attentive}. Let $|A|$ be the number of edges in the graph with adjacency matrix $A$.

\subsubsection*{Changing Data Dimension}

Applying the MLP to each entry in
$
O \in \mathbb{R}^{(N_{\mathrm{obs}} \cdot n)\times d_{\mathrm{data}}}
$
and
$
z,x^*\in
\mathbb{R}^{(N_{\mathrm{count}} \cdot n)\times d_{\mathrm{data}}}
$
takes
\[
(N_{\mathrm{obs}}n + 2N_{\mathrm{count}} n)
\, d_{\mathrm{data}} d_{\mathrm{hidden}}
\in
\Theta\!\left(
n\,d_{\mathrm{data}}d_{\mathrm{hidden}}
(N_{\mathrm{count}}+N_{\mathrm{obs}})
\right)
\]
operations. Since we use $N_{count} = 1$ throughout this work we get complexity
\[
\Theta\!\left(
n\,d_{\mathrm{data}}d_{\mathrm{hidden}}
N_{\mathrm{obs}}
\right)
\]

\subsubsection*{Propagation of Observational Information}
Each layer of the GAT uses adjacency matrix 
\[
G_c=
\begin{pmatrix}
M_n & \mathbf{0}_n & \cdots & \mathbf{0}_n & I_n \\
\mathbf{0}_n & M_n & \ddots & \vdots & \vdots \\
\vdots & \ddots & \ddots & \mathbf{0}_n & \vdots \\
\mathbf{0}_n & \cdots & \mathbf{0}_n & M_n & I_n \\
\mathbf{0}_n & \cdots & \cdots & \mathbf{0}_n & \mathbf{0}_n
\end{pmatrix}.
\]

With $|M_n| = n (n-1)$ and $|I_n| = n$ we get $|G_c| = N_{obs}n (n-1) + N_{obs} n$.
Further, we have $|\mathcal{V}| = (N_{obs} + 1)n$.
For one layer we then get computational complexity 
\[
\Theta((N_{obs}+1) n d_{hidden}^2 + (N_{obs}n (n-1) + N_{obs} n) d_{hidden}).
\]
For all \(K\) layers in the block, we have computational complexity
\[
\Theta(K(N_{obs}+1) n d_{hidden}^2 + K(N_{obs}n^2) d_{hidden}) =\Theta(KN_{obs} n d_{hidden} (d_{hidden} + n)).
\]

\subsubsection*{Mixing Factual Information}

Each layer in the module for mixing factual information uses adjacency matrix
\[
G_C' =
\begin{pmatrix}
    \mathbf{0}_n & I_n \\
    \mathbf{0}_n & I_n
\end{pmatrix}.
\]
With $|G_C'|= 2n$ and $|\mathcal{V}|=2n$ we get computational complexity $\Theta(n d_{hidden}^2 + nd_{hidden})$.
All three layers for this module thus have complexity $\Theta(n d_{hidden}^2 + nd_{hidden})$.

\subsubsection*{Estimating Counterfactual Graph}

Each layer in this block uses adjacency matrix
\[
G_C'' = \begin{pmatrix}
M_n & I_n \\
I_n & M_n
\end{pmatrix}.
\]

We then get $|G_C''| = 2 n (n-1) + 2n = 2n^2$ and $|\mathcal{V}|=2n$. This gives us computational complexity of $\Theta(n d_{hidden}^2 + n^2 d_{hidden})$ in each layer.

All $\min\{\lfloor\frac{n}{2}\rfloor, 5\}$ layers together give us computational complexity $\Theta(n d_{hidden}^2 + n^2 d_{hidden})$ for the entire block.

The MLP layer needs
$
\Theta(n\,d_{\mathrm{hidden}}N_C)
$
operations.

Overall, the computational complexity of CFGNP is
\[
\begin{aligned}
\Theta\!\Big(
& n\,d_{\mathrm{data}}d_{\mathrm{hidden}}N_{\mathrm{obs}}
+ KN_{\mathrm{obs}} n d_{\mathrm{hidden}}(d_{\mathrm{hidden}} + n) \\
& + n d_{\mathrm{hidden}}^2 + n^2 d_{\mathrm{hidden}}
+ n d_{\mathrm{hidden}} N_C
\Big).
\end{aligned}
\]

%% file: appendix/permutation_appendix.tex
\subsection{Permutation Properties of CFGNP}\label{ssec:permutation_proof}
\begin{proposition}
Let
\[
F(z,T^*,t^*,Y^*,O)
\]
denote the output of the CFGNP model restricted to the target nodes \(Y^*\).
Then the CFGNP is

\begin{enumerate}
    \item permutation equivariant with respect to permutations of the nodes in
    \(Y^*\); and
    \item permutation invariant with respect to permutations of the nodes in
    \((Y^*)^c\).
\end{enumerate}
\end{proposition}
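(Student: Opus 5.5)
The plan is to write CFGNP as a composition of maps and check that each one commutes with a relabelling of the variables. Fix a permutation $\pi$ of the $n$ variable indices and let it act on every object carrying a node index. It permutes the columns of $z$, of each observational sample $O_j$, of $x^*$, and of the intervention mask $T^*$, and it maps the target set $Y^*$ to $\pi(Y^*)$. The composition is: the node-wise embedding MLPs; the GAT $m^{(1)}_\theta$ on the context graph with aggregation nodes $q_i$; the GAT $m^{(2)}_\theta$ linking $q'_i$ to $z_i$; the GAT $m^{(3)}_\theta$ on the reduced dual graph $G_{\mathrm{dir}}$; and the node-wise readout MLP to mixture parameters. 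The claim to establish first is that the full map $\Psi$ from inputs to per-node counterfactual outputs $(h^*_i)_{i\le n}$ is equivariant, i.e.
\[
\Psi(\pi\cdot z,\pi\cdot T^*,\pi\cdot t^*,\pi\cdot O)_{\pi(i)}=\Psi(z,T^*,t^*,O)_i \quad\text{for all } i.
\]
Both statements of the proposition then follow by restricting to the output coordinates indexed by $Y^*$.

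The equivariance proof goes stage by stage.

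\textbf{Embeddings.} The MLPs act entrywise. The choice between $h_{z,\mathrm{int}}$ and $h_{z,\mathrm{common}}$, and similarly for $x^*$, depends only on whether a node lies in $T^*$, and that membership is carried along by $\pi$. So this stage is equivariant.

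\textbf{Graph layers.} For each GAT stage, observe that the adjacency structure is invariant under the induced node permutation. For $m^{(1)}_\theta$, apply $\pi$ simultaneously inside every sample block and to the aggregation nodes; this preserves the block structure $G_c$, since $P M_n P^\top=M_n$ and $P I_n P^\top=I_n$. The same holds for $G_C'$ and $G_C''$. A message-passing layer on a graph with adjacency $A$ satisfies $\mathrm{GNN}(PX, PAP^\top)=P\,\mathrm{GNN}(X,A)$, because its message, update and (attention-normalised) aggregation functions are shared across nodes and the aggregation is permutation invariant. Combining these gives $\mathrm{GNN}(PX,A)=P\,\mathrm{GNN}(X,A)$ whenever $PAP^\top=A$. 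Since the edge sets carry no node-specific features, each stage is equivariant.

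\textbf{Depth and readout.} The number of layers $\min\{\lfloor n/2\rfloor,5\}$ depends only on $n$, so it is unaffected by $\pi$. The final MLP is node-wise. Composing all stages yields equivariance of $\Psi$.

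From this, both parts follow. For (1), take $\pi$ supported on $Y^*$. Then $\pi(Y^*)=Y^*$, and the restricted output $F$ is permuted by the same $\pi$. For (2), take $\pi$ supported on $(Y^*)^c$. Then $\pi$ fixes every $i\in Y^*$, so $\Psi(\pi\cdot\text{input})_i=\Psi(\text{input})_i$ for $i\in Y^*$, and $F$ is unchanged.

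The step I expect to need the most care is the handling of node-distinguishing inputs. The intervention mask and the masked entries of $x^*$ must be treated as node features that are permuted along with the nodes, not as fixed positional information. I must also verify that nothing in the implementation, such as positional encodings or index-dependent weights, breaks the shared-parameter assumption. If any such component existed, the argument would fail, so I would state explicitly that the architecture uses none.
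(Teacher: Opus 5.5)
Your proposal is correct and follows essentially the same route as the paper: stage-by-stage equivariance of the node-wise MLPs (including the $T^*$-dependent choice of embedding), of the three GATs (because the block adjacencies built from $M_n$, $I_n$, $\mathbf{0}_n$ are fixed by the block-diagonal permutation), and of the node-wise readout, with both claims then obtained by restricting the per-node output to $Y^*$. Your explicit remark that the depth $\min\{\lfloor n/2\rfloor,5\}$ depends only on $n$ is a small point the paper leaves implicit.
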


\begin{proof}
In the following, let $M_n$ be the $n \times n$ matrix that has $0$ on the diagonal and everywhere else $1$. Let $I_n$ be the $n \times n$ identity matrix. Finally, let $\mathbf{0}_n$ be the $n \times n$ matrix with only $0$.

Let \(P\in\{0,1\}^{n\times n}\) be the permutation matrix associated with a
permutation \(\pi\) of the \(n\) variables. We use the convention
$
Pe_i=e_{\pi(i)},
$
where \(e_i\) denotes the \(i\)-th standard basis vector. Consequently, for
every node-indexed matrix \(H\in\mathbb{R}^{n\times d}\),
\[
(PH)_j=H_{\pi^{-1}(j)}.
\]

A permutation represents only a relabelling of the variables and must
therefore be applied consistently to every node-indexed input. We define
\[
z_P:=Pz,
\qquad
x_P^*:=Px^*,
\qquad
O_{k,P}:=PO_k
\]
for every observational sample \(O_k\). We further define
\[
T^*_\pi:=\pi(T^*),
\qquad
Y_\pi^*:=\pi(Y^*).
\]

Let
$
\tau_{T^*}\in\{0,1\}^n
$
be the indicator vector of the intervention set \(T^*\), i.e.,
\[
(\tau_{T^*})_i
=
\begin{cases}
1, & i\in T^*,\\
0, & i\notin T^*.
\end{cases}
\]
The indicator vector of the permuted intervention set \(T^*_\pi\) is
$
\tau_{T^*,P}:=P\tau_{T^*}.
$
Indeed, for every \(j\in\{1,\ldots,n\}\),
\[
(\tau_{T^*,P})_j
=
(P\tau_{T^*})_j
=
(\tau_{T^*})_{\pi^{-1}(j)},
\]
which is equal to \(1\) precisely when
\(\pi^{-1}(j)\in T^*\), or equivalently when \(j\in\pi(T^*)\).

Recall that the counterfactual input \(x^*\) contains the intervention values
at the indices in \(T^*\) and is zero elsewhere:
\[
x_i^*
=
\begin{cases}
t_i^*, & i\in T^*,\\
0, & i\notin T^*.
\end{cases}
\]
For every \(j\),
\[
(x_P^*)_j
=
(Px^*)_j
=
x^*_{\pi^{-1}(j)}.
\]

\[
(x_P^*)_j
=
\begin{cases}
t_{\pi^{-1}(j)}^*, & \pi^{-1}(j)\in T^*,\\
0, & \pi^{-1}(j)\notin T^*.
\end{cases}
\quad\Leftrightarrow\quad
(x_P^*)_j
=
\begin{cases}
(Pt^*)_j, & j\in T^*_\pi,\\
0, & j\notin T^*_\pi.
\end{cases}
\]
Thus, the intervention values and the intervention indices are permuted
consistently.

We now show the characteristics for each of the model components.

\paragraph{Equivariance of nodewise MLPs.} We first consider the nodewise MLPs applied at the beginning of the model.
For every observational sample \(O_k\), the same MLP \(h_O\) is applied
independently to every node. Therefore,
$
h_O(O_{k,P})
=
Ph_O(O_k).
$

For the factual observation \(z\), define the nodewise transformation
\[
h_z(z,\tau_{T^*})_i
=
(\tau_{T^*})_i h_{z,\mathrm{int}}(z_i)
+
\bigl(1-(\tau_{T^*})_i\bigr)h_{z,\mathrm{common}}(z_i).
\]
We now verify permutation equivariance coordinatewise. For an arbitrary
\(j\in\{1,\ldots,n\}\):
\[
\begin{aligned}
h_z(z_P,\tau_{T^*,P})_j
&=
(\tau_{T^*,P})_j
h_{z,\mathrm{int}}\bigl((z_P)_j\bigr)
+
\bigl(1-(\tau_{T^*,P})_j\bigr)
h_{z,\mathrm{common}}\bigl((z_P)_j\bigr)\\
&=
(\tau_{T^*})_{\pi^{-1}(j)}
h_{z,\mathrm{int}}\bigl(z_{\pi^{-1}(j)}\bigr)\\
&\quad+
\bigl(1-(\tau_{T^*})_{\pi^{-1}(j)}\bigr)
h_{z,\mathrm{common}}\bigl(z_{\pi^{-1}(j)}\bigr)\\
&=
h_z(z,\tau_{T^*})_{\pi^{-1}(j)}\\
&=
\bigl(Ph_z(z,\tau_{T^*})\bigr)_j.
\end{aligned}
\]
Since this equality holds for every \(j\),
\[
h_z(Pz,P\tau_{T^*})
=
Ph_z(z,\tau_{T^*}).
\]

The same argument applies to the counterfactual input.
\[
h_x(Px^*,P\tau_{T^*})
=
Ph_x(x^*,\tau_{T^*}).
\]

\paragraph{Equivariance of the GATs.}
We now consider the GATs. A GAT \(m_\theta\) is permutation equivariant in the
sense that, for a node-feature matrix \(H\), adjacency matrix \(G\), and
permutation matrix \(Q\),
\[
m_\theta(QH,QGQ^\top)
=
Qm_\theta(H,G).
\]

For the observational graph, the same permutation is applied to every
observational sample and to the aggregation nodes \(q\). The corresponding
block-diagonal permutation matrix is
\[
Q_O
=
\operatorname{diag}
\left(
\underbrace{P,\ldots,P}_{N_{\mathrm{obs}}+1\text{ times}}
\right)
=
I_{N_{\mathrm{obs}}+1}\otimes P.
\]
The adjacency matrix \(G_c\) consists only of blocks \(M_n\), \(I_n\), and
\(\mathbf{0}_n\). Since
\[
PM_nP^\top=M_n,
\qquad
PI_nP^\top=I_n,
\qquad
P\mathbf{0}_nP^\top=\mathbf{0}_n,
\]
we have
$
Q_OG_cQ_O^\top=G_c.
$
This suppresses the unchanged adjacency matrix from the notation,
\[
\begin{aligned}
m_\theta^{(1)}
\left(
Q_O
\begin{pmatrix}
O'\\
q
\end{pmatrix}
\right)
&=
Q_O
m_\theta^{(1)}
\left(
\begin{pmatrix}
O'\\
q
\end{pmatrix}
\right)=
Q_O
\begin{pmatrix}
\widetilde O\\
q'
\end{pmatrix}.
\end{aligned}
\]
It follows in particular that
$
q'_P = Pq'.
$

For the graph containing the factual representation \(z\) and the aggregated
observational representation \(q'\), define the permutation
\[
Q_{zq}
=
\begin{pmatrix}
P & \mathbf{0}_n\\
\mathbf{0}_n & P
\end{pmatrix}.
\]
Because
$
Q_{zq}G_C'Q_{zq}^\top=G_C',
$
permutation equivariance gives
\[
\begin{aligned}
m_\theta^{(2)}
\left(
Q_{zq}
\begin{pmatrix}
z\\
q'
\end{pmatrix}
\right)
&=
Q_{zq}
m_\theta^{(2)}
\left(
\begin{pmatrix}
z\\
q'
\end{pmatrix}
\right)\\
&=
Q_{zq}
\begin{pmatrix}
z'\\
q''
\end{pmatrix}\\
&=
\begin{pmatrix}
Pz'\\
Pq''
\end{pmatrix}.
\end{aligned}
\]

For the factual--counterfactual graph, define
\[
Q_{zx^*}
=
\begin{pmatrix}
P & \mathbf{0}_n\\
\mathbf{0}_n & P
\end{pmatrix}.
\]
With
\[
Q_{zx^*}G_C''Q_{zx^*}^\top=G_C'',
\]
we similarly obtain
\[
\begin{aligned}
m_\theta^{\mathrm{cf}}
\left(
Q_{zx^*}
\begin{pmatrix}
z'\\
x^*
\end{pmatrix}
\right)
&=
Q_{zx^*}
m_\theta^{\mathrm{cf}}
\left(
\begin{pmatrix}
z'\\
x^*
\end{pmatrix}
\right)\\
&=
Q_{zx^*}
\begin{pmatrix}
z''\\
{x^*}'
\end{pmatrix}\\
&=
\begin{pmatrix}
Pz''\\
P{x^*}'
\end{pmatrix}.
\end{aligned}
\]
In particular, the counterfactual representation satisfies
$
{x^*}'_P = P{x^*}'.
$

\paragraph{The final MLP.} The final MLP is applied independently to every node of \({x^*}'\), with the
same parameters at each node. Therefore,
\[
m_\theta^{(3)}(P{x^*}')
=
Pm_\theta^{(3)}({x^*}').
\]
Let
\[
R:=m_\theta^{(3)}({x^*}')
\]
denote the nodewise output containing the mixture parameters. 
The preceding arguments imply that the nodewise output of the model transforms
as
\[
R_P = PR.
\]

The final nodewise output
$
R\in\mathbb{R}^{n\times 3N_c}
$
contains one row for each node, and the CFGNP returns only the rows indexed by
the target set:
\[
F(z,T^*,t^*,Y^*,O)
=
R\big|_{Y^*}.
\]

Moreover, the target set is relabelled according to
$
Y_\pi^*=\pi(Y^*)$.
Since
\[
(PR)_{\pi(i)}
=
R_i
\qquad
\text{for every } i\in Y^*,
\]
restricting \(PR\) to the indices in \(Y_\pi^*\) yields precisely the same
nodewise outputs as restricting \(R\) to \(Y^*\), but indexed according to the
permuted target nodes. Hence,
\[
F(z_P,T^*_\pi,t_P^*,Y_\pi^*,O_P)
=
(PR)\big|_{Y_\pi^*}
=
P_{Y^*} \left(R\big|_{Y^*}\right),
\]
where \(P_{Y^*}\) denotes the permutation induced by \(\pi\) on the target
outputs. Therefore,
\[
F(z_P,T^*_\pi,t_P^*,Y_\pi^*,O_P)
=
P_{Y^*}F(z,T^*,t^*,Y^*,O).
\]

This establishes permutation equivariance with respect to the target nodes.
If the permutation acts only on nodes in \((Y^*)^c\), then every target node
is fixed and the induced permutation \(P_{Y^*}\) is the identity. The same
identity therefore reduces to
\[
F(z_P,T^*_\pi,t_P^*,Y^*,O_P)
=
F(z,T^*,t^*,Y^*,O),
\]
which establishes permutation invariance with respect to permutations of the
non-target nodes.
\end{proof}

%% file: appendix/graph_size_appendix.tex
\subsection{Graph-Size and Observation-Count Transferability of CFGNP}\label{app:graph_size_trans}

The CFGNP architecture can be applied to causal graphs with different numbers of nodes and to observational datasets containing different numbers of samples. In particular, the learnable parameters of the model do not depend on either the graph size \(n\) or the number of observational samples \(N_{\mathrm{obs}}\). Consequently, a model trained using graphs with \(n\) nodes and \(N_{\mathrm{obs}}\) observational samples can be applied, without modifying its learned parameters, to graphs with \(m\) nodes and \(N_{\mathrm{obs}}'\) observational samples.

We formalize this property in the following proposition.

\begin{proposition}[Graph-size and observation-count transferability]
Let
\[
F_{\theta}^{(n,N_{\mathrm{obs}})}
\]
denote the CFGNP applied to a causal query with \(n\) endogenous variables and \(N_{\mathrm{obs}}\) observational samples. Suppose that
the number of message-passing layers is fixed depending on \(n\) and independently of \(N_{\mathrm{obs}}\), or, alternatively, repeated message-passing iterations reuse the same parameters.

Then the parameter space \(\Theta\) of \(F_{\theta}^{(n,N_{\mathrm{obs}})}\) is independent of both \(n\) and \(N_{\mathrm{obs}}\). Consequently, for arbitrary
\[
n,m,N_{\mathrm{obs}},N_{\mathrm{obs}}' \in \mathbb{N},
\]
the same parameter vector \(\theta \in \Theta\) defines both
\[
F_{\theta}^{(n,N_{\mathrm{obs}})}
\qquad\text{and}\qquad
F_{\theta}^{(m,N_{\mathrm{obs}}')}.
\]
Hence, a CFGNP trained on one graph size and one number of observational samples can be evaluated on different graph sizes and different numbers of observational samples without introducing, removing, or resizing any learned parameters.
\end{proposition}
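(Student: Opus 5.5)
The plan is to enumerate every learnable component of CFGNP and check, one by one, that its parameter shapes depend only on $d_{\mathrm{data}}$, $d_{\mathrm{hidden}}$, $N_c$ and the per-layer weights, and never on $n$ or $N_{\mathrm{obs}}$. I would then write $\Theta$ as the Cartesian product of these component parameter spaces. The result follows because a product of spaces, each independent of $(n,N_{\mathrm{obs}})$, is itself independent of $(n,N_{\mathrm{obs}})$.

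The components are as follows, in the order of the architecture in Section~\ref{sec:architecture}.

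\textbf{(i) Embedding MLPs.} The maps $h_O$, $h_{z,\mathrm{int}}$, $h_{z,\mathrm{common}}$, $h_{x,\mathrm{int}}$ and $h_{x,\mathrm{common}}$ are applied nodewise with shared weights. Each is a map $\mathbb{R}^{d_{\mathrm{data}}}\to\mathbb{R}^{d_{\mathrm{hidden}}}$, so its weights do not scale with $n$. Likewise, $h_O$ is shared across all $N_{\mathrm{obs}}$ samples, so its weights do not scale with $N_{\mathrm{obs}}$.

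\textbf{(ii) GAT blocks.} These are $m_\theta^{(1)}$, $m_\theta^{(2)}$ and $m_\theta^{(3)}$. A GAT layer \citep{brody2022attentivegraphattentionnetworks} consists of a weight matrix $W\in\mathbb{R}^{d'\times d}$ and an attention vector $a\in\mathbb{R}^{2d'}$. Both are shared over all nodes and edges, so the layer acts on a graph of any size and any adjacency matrix, including $G_c$, $G_C'$ and $G_C''$ from App.~\ref{app:complexity_proof}.

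The adjacency matrices themselves are deterministic functions of $(n,N_{\mathrm{obs}})$ and carry no parameters. The aggregation nodes $q_i$ are initialized to zero and are not learned. Aggregation is permutation-invariant over neighbourhoods of arbitrary size, via attention-weighted sums with softmax normalisation.

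\textbf{(iii) Output MLP.} The final nodewise MLP maps $\mathbb{R}^{d_{\mathrm{hidden}}}\to\mathbb{R}^{3N_c}$ and is shared over nodes.

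The main obstacle is the depth of $m_\theta^{(3)}$, which is $\min\{\lfloor n/2\rfloor,5\}$ and therefore depends on $n$. This is exactly where the two alternative hypotheses of the proposition are used.

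Under the weight-sharing hypothesis, repeated iterations reuse one layer's parameters. The parameter set is then a single layer's worth, independent of $n$, and the depth only affects how often it is applied.

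Under the other hypothesis, the number of layers is a fixed function of $n$ only. Here I would use the cap: because the depth never exceeds $5$, one can take $\Theta$ to contain parameters for five layers. A graph with $n$ nodes then uses the first $\min\{\lfloor n/2\rfloor,5\}$ of them. With this convention the parameter space is independent of $n$. I would also note that it is trivially independent of $N_{\mathrm{obs}}$, since the depth does not involve $N_{\mathrm{obs}}$.

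The same reasoning applies to $K$ in $m_\theta^{(1)}$ and to the three layers of $m_\theta^{(2)}$, which are constants.

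Combining (i)--(iii), $\Theta$ is a fixed product space. Hence any $\theta\in\Theta$ simultaneously defines $F_\theta^{(n,N_{\mathrm{obs}})}$ and $F_\theta^{(m,N_{\mathrm{obs}}')}$, with the graph structure instantiated from $(m,N_{\mathrm{obs}}')$.
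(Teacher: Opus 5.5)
Your proposal is correct and follows the same route as the paper's proof. Both check each component in turn: the node-wise embedding MLPs, the GAT parameters $W$ and $a$, and the output MLP have shapes determined only by $d_{\mathrm{data}}$, $d_{\mathrm{hidden}}$ and $N_c$, while the adjacency matrices $G_c$, $G_C'$, $G_C''$ and the zero-initialized aggregation nodes carry no learned parameters.

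Your explicit handling of the $n$-dependent depth $\min\{\lfloor n/2\rfloor,5\}$, either by weight tying or by allocating five layers and using a prefix, is more careful than the paper. The paper's proof never addresses depth, and its first hypothesis, read literally with untied layers, would not by itself make $\Theta$ independent of $n$ for $n<10$.
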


\begin{proof}
We show that every learnable component of the CFGNP is independent of both the number of endogenous variables \(n\) and the number of observational samples \(N_{\mathrm{obs}}\).

First, consider the node-wise input transformations. Let
\[
h_O :
\mathbb{R}^{d_{\mathrm{data}}}
\rightarrow
\mathbb{R}^{d_{\mathrm{hidden}}}
\]
denote the MLP applied to the observational data. For
\[
O \in
\mathbb{R}^{N_{\mathrm{obs}} \times n \times d_{\mathrm{data}}},
\]
the same map \(h_O\) is applied independently to every node in every observational sample,
\[
O_{k,i}
\mapsto
h_O(O_{k,i}),
\qquad
k=1,\ldots,N_{\mathrm{obs}},
\qquad
i=1,\ldots,n.
\]
Thus, changing either \(n\) or \(N_{\mathrm{obs}}\) changes only the number of evaluations of \(h_O\), but not its input dimension, output dimension, or learned parameters. The same argument applies to the node-wise maps
$
h_{z,\mathrm{int}},
$ $
h_{z,\mathrm{common}},
$ $
h_{x,\mathrm{int}},
$ and $
h_{x,\mathrm{common}}$.

Next, consider a GAT layer. For a node \(i\), its update is of the form
\[
h_i'
=
\sigma\left(
\sum_{j\in\mathcal{N}(i)}
\alpha_{ij}Wh_j
\right),
\]
where
\[
\alpha_{ij}
=
\frac{\exp(e_{ij})}
{\sum_{k\in\mathcal{N}(i)}\exp(e_{ik})},
\]
and
\[
e_{ij}
=
\mathrm{LeakyReLU}
\left(
a^\top
\left[
Wh_i \Vert Wh_j
\right]
\right).
\]
The learnable quantities \(W\) and \(a\) depend only on the feature dimensions. They do not depend on the total number of nodes in the graph, the number of edges, or the number of observational samples represented in the graph. Changing \(n\) or \(N_{\mathrm{obs}}\) therefore changes only the set of nodes and neighborhoods on which the same local update rule is evaluated.

We now consider the observational aggregation graph \(G_c\). For \(n\) endogenous variables and \(N_{\mathrm{obs}}\) observational samples, its adjacency matrix is
\[
G_c^{(n,N_{\mathrm{obs}})}
=
\begin{pmatrix}
M_n & \mathbf{0}_n & \cdots & \mathbf{0}_n & I_n \\
\mathbf{0}_n & M_n & \ddots & \vdots & I_n \\
\vdots & \ddots & \ddots & \mathbf{0}_n & \vdots \\
\mathbf{0}_n & \cdots & \mathbf{0}_n & M_n & I_n \\
\mathbf{0}_n & \cdots & \cdots & \mathbf{0}_n & \mathbf{0}_n
\end{pmatrix}.
\]
Thus, \(G_c^{(n,N_{\mathrm{obs}})}\) contains \(N_{\mathrm{obs}}\) diagonal blocks \(M_n\), one for each observational sample, and \(N_{\mathrm{obs}}\) corresponding blocks \(I_n\) connecting the observational nodes to the aggregation nodes \(q\).

Changing \(N_{\mathrm{obs}}\) only changes the number of \(M_n\)- and \(I_n\)-blocks, while changing \(n\) only changes their dimensions. The construction rule itself remains unchanged. Since the GAT \(m_\theta^{(1)}\) applies the same attention and message-passing functions to every node and edge, its learned parameters are independent of both \(n\) and \(N_{\mathrm{obs}}\).

In particular, if
\[
[O',q']
=
m_\theta^{(1)}([O',q];G_c),
\]
then changing the number of observational samples only changes the size of \(O'\) and the number of incoming messages available to the aggregation nodes \(q\). It does not change the dimensionality of the representation
$
q_i' \in \mathbb{R}^{d_{\mathrm{hidden}}}
$
associated with each endogenous variable. Hence, the remainder of the architecture receives the same type of representation regardless of \(N_{\mathrm{obs}}\).

Next, consider the graph used to combine the observational representation with the factual observation. For \(n\) variables,
\[
G_C'^{(n)}
=
\begin{pmatrix}
    \mathbf{0}_n & I_n \\
    \mathbf{0}_n & I_n
\end{pmatrix}.
\]
For \(m\) variables, the corresponding graph is
\[
G_C'^{(m)}
=
\begin{pmatrix}
    \mathbf{0}_m & I_m \\
    \mathbf{0}_m & I_m
\end{pmatrix}.
\]
These graphs have different dimensions but are generated according to the same construction rule. Moreover, the number \(N_{\mathrm{obs}}\) no longer appears explicitly at this stage, because the observational samples have already been aggregated into the \(n\) representations contained in \(q'\). Therefore, the parameters of the GAT used in this stage are independent of both \(n\) and \(N_{\mathrm{obs}}\).

The same argument applies to the graph used for counterfactual inference. For \(n\) variables,
\[
G_C''^{(n)}
=
\begin{pmatrix}
    M_n & I_n \\
    I_n & M_n
\end{pmatrix},
\]
whereas for \(m\) variables,
\[
G_C''^{(m)}
=
\begin{pmatrix}
    M_m & I_m \\
    I_m & M_m
\end{pmatrix}.
\]
Changing the graph size changes only the number of nodes and edges on which the shared message-passing functions are evaluated. The learned parameters of the GAT are unchanged. Furthermore, \(N_{\mathrm{obs}}\) does not affect the dimensionality of the node representations entering this stage.

Finally, the output network is applied independently and identically to each target node,
\[
m_\theta^{(3)}
:
\mathbb{R}^{d_{\mathrm{hidden}}}
\rightarrow
\mathbb{R}^{3N_c}.
\]
Changing the number of endogenous variables changes only the number of node-wise predictions, while changing \(N_{\mathrm{obs}}\) has no effect on the dimensions of the output map. Hence, neither quantity changes the learned parameters of \(m_\theta^{(3)}\).

Consequently, every learned component of the CFGNP is parameterized only by quantities such as
$
d_{\mathrm{data}},
$ $
d_{\mathrm{hidden}},
$ $
N_c,
$
and not by \(n\) or \(N_{\mathrm{obs}}\). The matrices
$
M_n,
$ $
I_n,
$ and $
\mathbf{0}_n
$
and the repetition of the observational blocks determine only the structure and size of the computational graph; they contain no learned parameters. Therefore,
\[
\Theta_{n,N_{\mathrm{obs}}}
=
\Theta_{m,N_{\mathrm{obs}}'}
=
\Theta
\]
for arbitrary
\[
n,m,N_{\mathrm{obs}},N_{\mathrm{obs}}' \in \mathbb{N}.
\]
Thus, the same parameter vector \(\theta\) can be applied to causal graphs with different numbers of nodes and to observational datasets containing different numbers of samples.
\end{proof}

This establishes architectural transferability across both graph size and observation count. It does not, by itself, guarantee identical predictive performance when \(n\) or \(N_{\mathrm{obs}}\) differs substantially from the values encountered during training, since such performance additionally depends on the distribution of training and test instances.

%% file: appendix/extended_model_architecture.tex
\section{Extended Model Architecture}
In the following, let $M_n$ be the $n \times n$ matrix that has $0$ on the diagonal and everywhere else $1$. Let $I_n$ be the $n \times n$ identity matrix. Finally, let $\mathbf{0}_n$ be the $n \times n$ matrix with only $0$.

\subsection{Model Input}
Our input is a tuple $((z, T^*, t^*, Y^*), O)$ with $O  \in \mathbb{R}^{N_{obs} \times n \times d_{data}}$, $z \in \mathbb{R}^{1 \times n \times d_{data}}$, and $x^* \in \mathbb{R}^{N_{count} \times n \times d_{data}}$.

\subsection{Changing data dimension}
We apply MLPs on each node.
On $O$ we apply the same MLP $h_O$ across all nodes. However, for the other parts we want the model to learn to distinguish between nodes we intervene on and the rest. Therefore, we apply an MLP $h_{z, int}$ on the indices $T^*$ in $z$ and an MLP $h_{z, common}$ on the other nodes.
Similarly, we apply the MLP $h_{x, int}$ on the indices $T^*$ of $x^*$ and an MLP $h_{x, common}$ on the other indices.

\subsection{Propagation of Observational Information}
We reshape $O$ into $O' \in \mathbb{R}^{(N_{obs} \cdot n) \times d_{hidden}}$. Furthermore, we introduce a set of nodes $q \in \mathbb{R}^{n \times d_{hidden}}$, initialized as $q=\mathbf{0}$.

For the GAT we use the adjacency matrix
\[
G_c=
\begin{pmatrix}
M_n & \mathbf{0}_n & \cdots & \mathbf{0}_n & I_n \\
\mathbf{0}_n & M_n & \ddots & \vdots & \vdots \\
\vdots & \ddots & \ddots & \mathbf{0}_n & \vdots \\
\mathbf{0}_n & \cdots & \mathbf{0}_n & M_n & I_n \\
\mathbf{0}_n & \cdots & \cdots & \mathbf{0}_n & \mathbf{0}_n
\end{pmatrix}.
\]

We then propagate information on $G_c$ using a Graph Attention Network (GAT) $m^{(1)}_\theta$ consisting of $K$ layers. A single GAT layer updates each node representation by computing attention coefficients over its neighbors. Let $h_i^{(t)}$ denote the representation of node $i$ at layer $k$. The attention coefficients are computed as
\[
\alpha_{ij}^{(k)}
=
\frac{\exp(e_{ij}^{(k)})}
{\sum_{k\in\mathcal{N}(i)}\exp(e_{ik}^{(k)})},
\qquad
e_{ij}^{(k)}
=
\mathrm{LeakyReLU}\!\left(a^\top[Wh_i^{(k)} \Vert Wh_j^{(k)}]\right),
\]
where $W$ and $a$ are learnable parameters. The node representations are then updated according to
\[
h_i^{(k+1)}
=
\sigma\!\left(
\sum_{j\in\mathcal{N}(i)}
\alpha_{ij}^{(k)}Wh_j^{(k)}
\right).
\]

Applying the entire GAT to the initial node representations yields
\[
[O',q'] = m^{(1)}_\theta([O',q]).
\]

\subsection{Mixing Factual Observation}
We construct a new graph $G_C'$ and apply a GAT $m^{(2)}_\theta$ consisting of three layers.

The resulting adjacency matrix is
\[
G_C' =
\begin{pmatrix}
    \mathbf{0}_n & I_n \\
    \mathbf{0}_n & I_n
\end{pmatrix}.
\]

Applying the GAT three times yields
\[
[z',q''] = m^{(2)}_\theta([z,q']).
\]

\subsection{Estimating Counterfactual Distribution}
We build a new graph $G_C''$ with adjacency matrix
\[
G_C'' = \begin{pmatrix}
M_n & I_n \\
I_n & M_n
\end{pmatrix}.
\]

To propagate information through the entire graph, we use $\min\{\lfloor\frac{n}{2}\rfloor, 5\}$ GAT layers, collected in the MPNN $m_\theta^{(2)}$, which yields
\[
[z'', {x^*}'] = m_\theta^{(2)}([z', x^*]).
\]

Finally, we apply an MLP $m_\theta^{(3)}$ with hidden dimension $3N_c$. After reshaping, the network outputs the parameters $\{\mu_k,\sigma_k,w_k\}_{k=1}^{N_c}$ for each node.

%% file: appendix/data_generation.tex
\section{Experimental Setup}

\input{appendix/model_params}

\subsection{CheXpert Experiment Setup}\label{app:chexpert_generation}
We use $N_{obs}=100$ for the CheXpert samples.
For each image, the disease information was represented by the 13 CheXpert findings.
We combine them into one \texttt{No Finding} attribute. The \texttt{Disease} attribute is just the complement.
The resulting dataset has some data imbalances. The Sex feature has $41.24\%$ female values and the rest is male. Only $17.24\%$ of samples have PA as value and only $9.05\%$ of the samples exhibit a disease. Finally, the average age in the samples is $60.66$ years.

\paragraph{Counterfactual Evaluation.} \label{app:details_classifier}
To evaluate which attributes a generated counterfactual image exhibits, we use classifiers/regression models.
For the classifiers and regression models of attributes Sex, AP/PA, Age, and Disease we stick closely to \cite{gichoya2022ai}. We apply the pretrained ResNet \citep{he2016deep} and subsequently apply multiple MLP layers. For regression attributes we have final dimension 1 and for the classification attributes we apply the softmax activation function in the end to normalize the outputs to probabilities.
We add here the performance of the classification/regression models of the features. The Sex classifier has a test accuracy of $0.9901$ and a ROC-AUC of $0.9995$. The Age regression model has MAE $6.1473$. Further, the AP/PA classifier has both accuracy and ROC-AUC of $1.0$. Finally, the \texttt{No Finding} classifier has accuracy $0.9010$ and ROC-AUC $0.9314$.
Figure \ref{fig:training_histories} shows the training curves for each of the classifiers.

\begin{figure}
    \centering
    \includegraphics[width=0.7\linewidth]{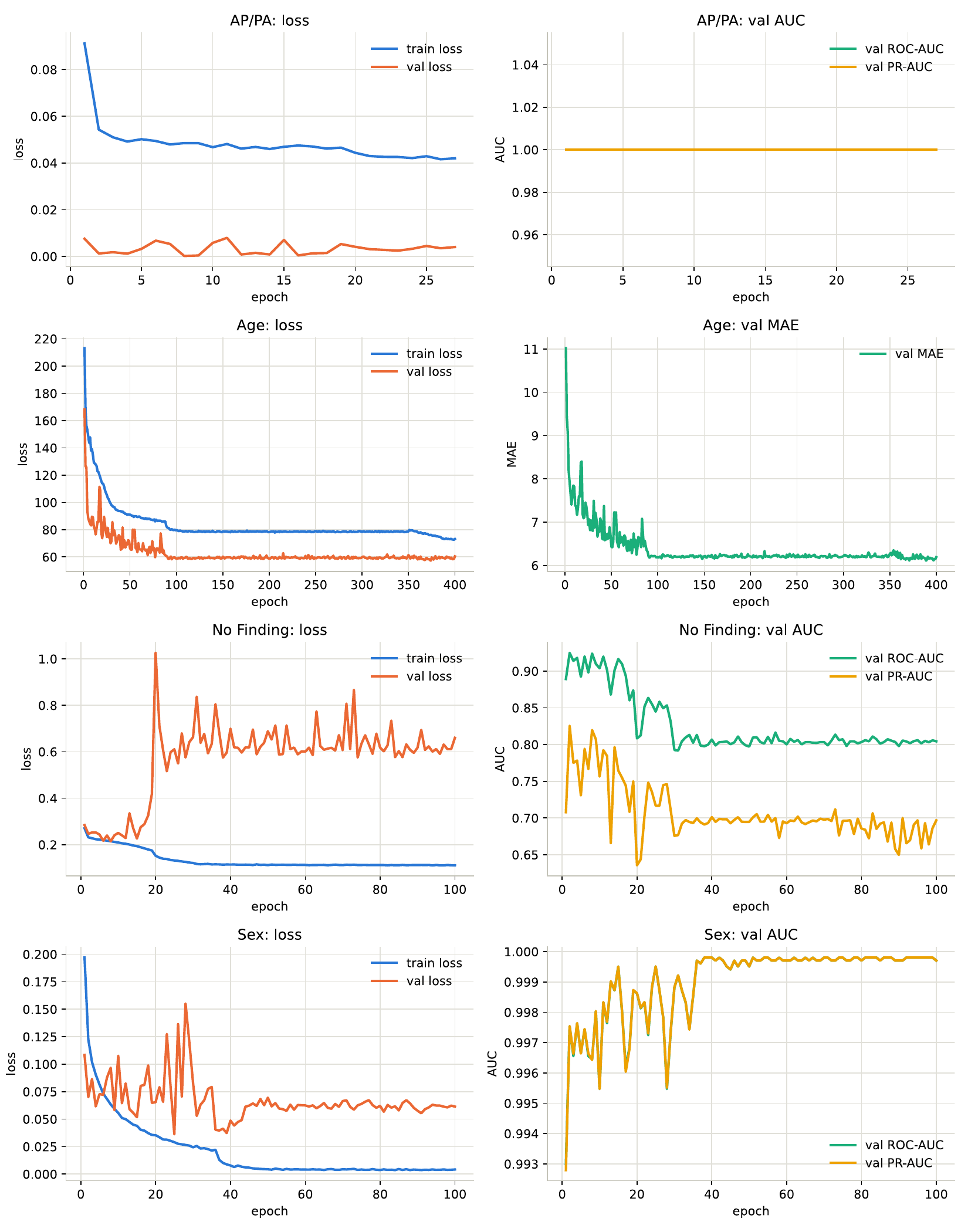}
    \caption{The training curves of the classifiers and regression models used for the classification of the generated CheXpert images.}
    \label{fig:training_histories}
\end{figure}

\paragraph{CheXpert training objective.}
To the normal classification/regression losses we also add a reconstruction loss to ensure that the images look like actual images. We use MSE, LPIPS\citep{zhang2018unreasonableeffectivenessdeepfeatures}, and the total variation \citep{javanmardi2018unsupervisedtotalvariationloss} of $\hat{x}_{\mathrm{cf}}$, to get
\begin{equation}
    \mathcal{L}_{\mathrm{rec}}
    =
    \lambda_M \operatorname{MSE}\left(x_{\mathrm{real}}, \hat{x}_{\mathrm{cf}}\right)
    +
    \lambda_L \operatorname{LPIPS}\left(x_{\mathrm{real}}, \hat{x}_{\mathrm{cf}}\right)
    +
    \lambda_{TV} \operatorname{TV}\left(\hat{x}_{\mathrm{cf}}\right),
\end{equation}

with $\lambda_M = 15$, $\lambda_L = 30$, and $\lambda_{TV} = 30$. 
The resulting total loss combines the classification, regression, and reconstruction objectives:
\begin{equation}
    \mathcal{L}
    =
    \sum_{i \in \mathcal{C}}
    \operatorname{CE}\left(y_i, \hat{y}_i\right)
    +
    \sum_{r \in \mathcal{R}}
    \frac{1}{t_r}
    \operatorname{MSE}\left(y_r, \hat{y}_r\right)
    +
    \mathcal{L}_{\mathrm{rec}}.
\end{equation}
where $\mathcal{C}$ and $\mathcal{R}$ denote the sets of classification and regression features, respectively.

\subsection{Synthetic Experiments Setup}\label{app:synth_exp_gen}
This appendix gives the complete definitions of the SCMs used in the (semi-) synthetic experiments. Their definitions follow the benchmark specification in
Appendix~E of \cite{sanchez-martin_vaca_2021}. They are reproduced here so that
the experimental data-generating processes can be understood without
the original paper. We also keep to \cite{sanchez-martin_vaca_2021} for the computation of the standard deviation of the test MSE. We train the model with 5 different seeds and then compute mean and standard deviation of the runs.
We use $N_{obs}=100$ in all synthetic experiments.

\subsubsection{Triangle}
\label{app:triangle}

\begin{figure}[H]
    \centering
    \includegraphics[width=0.3\linewidth]{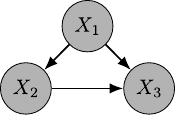}
    \caption{The causal graph of the triangle dataset. The causal graph has three nodes and three edges.}
    \label{fig:graph_triangle}
\end{figure}

The Triangle SCM has the underlying causal graph as depicted in Figure \ref{fig:graph_triangle}.
Hence $X_1$ affects $X_3$ through the direct path
$X_1\rightarrow X_3$ and through the mediated path
$X_1\rightarrow X_2\rightarrow X_3$.

\paragraph{Exogenous distributions.}

For the LIN and NLIN variants,
\begin{align}
U_1 &\sim \tfrac{1}{2}\mathcal{N}(-2,1.5)
       +\tfrac{1}{2}\mathcal{N}(1.5,1).
\end{align}
For NADD,
\begin{align}
U_1 &\sim \tfrac{1}{2}\mathcal{N}(-2.5,1)
       +\tfrac{1}{2}\mathcal{N}(2.5,1).
\end{align}
The remaining noises are
\begin{align}
\text{LIN:}\quad&
U_2\sim\mathcal{N}(0,1),&
U_3\sim\mathcal{N}(0,1),\\
\text{NLIN:}\quad&
U_2\sim\mathcal{N}(0,0.1),&
U_3\sim\mathcal{N}(0,1),\\
\text{NADD:}\quad&
U_2\sim\mathcal{N}(0,0.25),&
U_3\sim\mathcal{N}(0,0.0625).
\end{align}
Here the second parameter of $\mathcal{N}(\mu,\cdot)$ follows the convention
used in the benchmark specification.

\paragraph{Linear (LIN).}

\begin{align}
X_1 &= U_1,\\
X_2 &= -X_1+U_2,\\
X_3 &= X_1+0.25X_2+U_3.
\end{align}

\paragraph{Nonlinear additive (NLIN).}

\begin{align}
X_1 &= U_1,\\
X_2 &= -1+\frac{3}{1+\exp(-2X_1)}+U_2,\\
X_3 &= X_1+0.25X_2^2+U_3.
\end{align}

The causal mechanisms are nonlinear, but each exogenous variable enters its
structural equation additively.

\paragraph{Nonlinear non-additive (NADD).}

\begin{align}
X_1 &= U_1,\\
X_2 &= 0.25\,\operatorname{sgn}(U_2)
       \frac{X_1^2}{1+U_2^2},\\
X_3 &= -1+0.1\,\operatorname{sgn}(U_3)
       \left(X_1^2+X_2^2\right)+U_3.
\end{align}
The function $\operatorname{sgn}(\cdot)$ returns the sign of its argument.

\subsubsection{M-graph}
\label{app:mgraph}

\begin{figure}[H]
    \centering
    \includegraphics[width=0.4\linewidth]{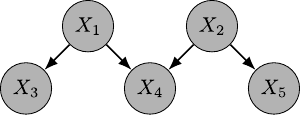}
    \caption{The causal graph of the M-shape data. It has five nodes and four edges.}
    \label{fig:graph_mshape}
\end{figure}

The M-graph contains five endogenous variables as shown in Figure \ref{fig:graph_mshape}.
All exogenous variables are independent standard normal variables:
\begin{equation}
    U_i\sim\mathcal{N}(0,1),\qquad i=1,\ldots,5.
\end{equation}

\paragraph{Linear (LIN).}

\begin{align}
X_1 &= U_1,\\
X_2 &= U_2,\\
X_3 &= X_1+U_3,\\
X_4 &= -X_2+0.5X_1+U_4,\\
X_5 &= -1.5X_2+U_5.
\end{align}

\paragraph{Nonlinear additive (NLIN).}

\begin{align}
X_1 &= U_1,\\
X_2 &= U_2,\\
X_3 &= X_1+0.5X_1^2+U_3,\\
X_4 &= -X_2+0.5X_1^2+U_4,\\
X_5 &= -1.5X_2^2+U_5.
\end{align}

\paragraph{Nonlinear non-additive (NADD).}

\begin{align}
X_1 &= U_1,\\
X_2 &= U_2,\\
X_3 &= X_1U_3,\\
X_4 &= \left(-X_2+0.5X_1^2\right)U_4,\\
X_5 &= -1.5X_2^2U_5.
\end{align}

\subsubsection{German Loan}
\label{app:loan}

\begin{figure}[H]
    \centering
    \includegraphics[width=0.5\linewidth]{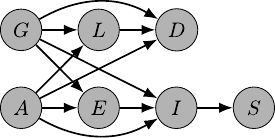}
    \caption{The causal graph of the German loan data. The nodes are real attributes used for German credit rating system. they are connected based on the real-world connections.}
    \label{fig:graph_loan}
\end{figure}

The Loan benchmark is a semi-synthetic SCM from the causal-recourse literature
and is inspired by the German Credit domain. It contains seven endogenous
variables as shown in Figure \ref{fig:graph_loan}.
In the figure, $G$ is gender, $A$ age, $E$ education, $L$ loan amount, $D$ loan
duration, $I$ income, and $S$ savings.

The structural assignments are
\begin{align}
G &= U_G,\\
A &= -35+U_A,\\
E &= -0.5+
\left(
1+\exp\left[
1-0.5G-\left(1+\exp(-0.1A)\right)^{-1}-U_E
\right]
\right)^{-1},\\
L &= 1+0.01(A-5)(5-A)+G+U_L,\\
D &= -1+0.1A+2G+L+U_D,\\
I &= -4+0.1(A+35)+2G+GE+U_I,\\
S &= -4+1.5\,\mathbb{I}_{\{I>0\}}I+U_S.
\end{align}
Here $\mathbb{I}_{\{I>0\}}$ is one if $I>0$ and zero otherwise.

The exogenous distributions are
\begin{align}
U_G &\sim \operatorname{Bernoulli}(0.5),\\
U_A &\sim \operatorname{Gamma}(10,3.5),\\
U_E &\sim \mathcal{N}(0,0.25),\\
U_L &\sim \mathcal{N}(0,4),\\
U_D &\sim \mathcal{N}(0,9),\\
U_I &\sim \mathcal{N}(0,4),\\
U_S &\sim \mathcal{N}(0,25).
\end{align}

\subsection{Structural Equations for Artificial Graphs}\label{datageneration_sizeexp}
Variables are generated as $X=f(\operatorname{Pa}(X))+\epsilon$, where $f$ is sampled from the following set of transformations:
\begin{align*}
\text{Linear:}\qquad
f_1(\operatorname{Pa}(X))
    &= a \sum_{X' \in \operatorname{Pa}(X)} X' + b,
    && a=0.5,\; b=0,\\
\text{Sum-sqrt:}\qquad
f_2(\operatorname{Pa}(X))
    &= a \sum_{X' \in \operatorname{Pa}(X)} \sqrt{|X'|} + b,
    && a=0.5,\; b=0,\\
\text{Sum-sine:}\qquad
f_3(\operatorname{Pa}(X))
    &= a \sum_{X' \in \operatorname{Pa}(X)} \sin(cX') + b,
    && a=1,\; b=0,\; c=0.5,\\
\text{Sum-tanh:}\qquad
f_4(\operatorname{Pa}(X))
    &= a \sum_{X' \in \operatorname{Pa}(X)} \tanh(cX') + b,
    && a=1,\; b=0,\; c=2,\\
\text{Geometric mean:}\qquad
f_5(\operatorname{Pa}(X))
    &= a
    \left(
        \prod_{X' \in \operatorname{Pa}(X)} |X'|
    \right)^{\frac{1}{|\operatorname{Pa}(X)|}}
    + b,
    && a=3,\; b=0.1,\\
\text{Log-sum-exp:}\qquad
f_6(\operatorname{Pa}(X))
    &= a \log\left(
        \sum_{X' \in \operatorname{Pa}(X)} e^{X'}
    \right) + b,
    && a=1,\; b=\log 2,\\
\text{Sqrt-sum:}\qquad
f_7(\operatorname{Pa}(X))
    &= a \sqrt{
        \left|
            \sum_{X' \in \operatorname{Pa}(X)} X'
        \right|
    } + b,
    && a=1,\; b=0.
\end{align*}

\input{appendix/two_node_generation}

%% file: appendix/model_params.tex
\subsection{Model Hyperparameters}

For the (semi-)synthetic datasets, we use a hidden dimension of 80, 5 attention heads in the graph attention network (GAT) layers \citep{brody2022attentivegraphattentionnetworks}, and a batch size of 128. For the predicted MoG we use $N_C=3$ Gaussian mixture components.
For the image-based counterfactual datasets, which consist of chest X-rays, we use a hidden dimension of 256, 9 attention heads in the GAT layers, and a batch size of 32. We choose a hidden dimension of 256 to match the dimensionality of the encoded image representations, providing a natural choice for the width of the network.

We train each model for 200 epochs although we include early stopping after $25$ epochs. 
Further, we use the AdamW \citep{loshchilov2019decoupledweightdecayregularization} with initial learning rate $10^{-4}$ and a weight decay of $10^{-5}$.

%% file: appendix/two_node_generation.tex
\subsection{Identifiability in a Two-Node SCM}
\label{sec:id_experiments_proof}

\paragraph{Setup.}
Let $\bm X=(X_0,X_1)$ follow the SCM
\begin{equation}
    X_0 := U_0,
    \qquad
    X_1 := aX_0+\langle \bm g_\lambda(X_0),\bm W\rangle,
    \label{eq:scm}
\end{equation}
where $U_0\sim\mathcal N(0,1)$,
$\bm W\sim\mathcal N(\bm 0,I_2)$ are independent, and
\begin{equation}
    s(x):=\sqrt{\sigma_u^2+\sigma_v^2x^2},
    \qquad
    \varphi(x):=\arctan(\sigma_vx/\sigma_u),
\end{equation}
\begin{equation}
    \bm g_\lambda(x)
    :=
    s(x)
    \big(
        \cos(\lambda\varphi(x)),
        \sin(\lambda\varphi(x))
    \big),
    \qquad
    \lambda\in[0,1].
    \label{eq:g}
\end{equation}
Here $a$ controls the observational mechanism, whereas $\lambda$ controls
the cross-world coupling.

\begin{proposition}[Observational invariance]
\label{prop:obs-invariance}
For every $\lambda$,
\begin{equation}
    X_0\sim\mathcal N(0,1),
    \qquad
    X_1\mid X_0=x
    \sim
    \mathcal N\!\left(ax,s(x)^2\right).
    \label{eq:obs-law}
\end{equation}
Hence the observational distribution is independent of $\lambda$.
The same is true of the intervention
$P_{\mathrm{do}(X_0=\alpha)}$.
\end{proposition}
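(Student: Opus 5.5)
The plan is to compute the conditional law of $X_1$ given $X_0=x$ directly from the structural equation \eqref{eq:scm}, exploiting that $\bm W$ is independent of $U_0=X_0$ and rotationally invariant. First, $X_0=U_0\sim\mathcal N(0,1)$ holds trivially for every $\lambda$. Next, I fix $x$ and condition on $X_0=x$. By independence of $\bm W$ and $U_0$, the conditional law of $X_1$ is the law of $ax+\langle \bm g_\lambda(x),\bm W\rangle$ with $\bm W\sim\mathcal N(\bm 0,I_2)$. For a fixed vector $\bm c\in\mathbb R^2$, the scalar $\langle\bm c,\bm W\rangle$ is Gaussian with mean $0$ and variance $\|\bm c\|^2$. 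This follows because a linear functional of a standard Gaussian vector is Gaussian, with variance $\bm c^\top I_2\bm c$.

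The key observation is then that $\|\bm g_\lambda(x)\|^2=s(x)^2\big(\cos^2(\lambda\varphi(x))+\sin^2(\lambda\varphi(x))\big)=s(x)^2$, independently of $\lambda$. This gives $X_1\mid X_0=x\sim\mathcal N(ax,s(x)^2)$, which is \eqref{eq:obs-law}. Since the joint law is determined by the marginal of $X_0$ and this conditional, neither of which involves $\lambda$, the observational distribution does not depend on $\lambda$.

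For the intervention $\mathrm{do}(X_0=\alpha)$, I replace the first structural equation by $X_0:=\alpha$ and keep $X_1:=a\alpha+\langle\bm g_\lambda(\alpha),\bm W\rangle$ with $\bm W$ unchanged in distribution. The same computation gives $X_1\sim\mathcal N(a\alpha,s(\alpha)^2)$ under the intervention, again free of $\lambda$.

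There is no real obstacle here. The only point requiring care is justifying that conditioning on $X_0=x$ amounts to substituting $x$ into $\bm g_\lambda$. I would make this rigorous via independence of $(U_0,\bm W)$, for example by computing $\mathbb E[h(X_1)\mid X_0]$ with Fubini, or via the conditional characteristic function $\mathbb E[e^{itX_1}\mid X_0=x]=e^{itax-t^2s(x)^2/2}$. Measurability of $\bm g_\lambda$ is immediate from continuity of $s$ and $\varphi$, noting that $\sigma_u>0$ is implicitly assumed so that $\varphi$ is well defined.
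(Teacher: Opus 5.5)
Your proposal is correct and follows essentially the same route as the paper: the key step is $\|\bm g_\lambda(x)\|^2=s(x)^2$, which gives $\langle\bm g_\lambda(x),\bm W\rangle\sim\mathcal N(0,s(x)^2)$, and the same computation at $\alpha$ handles the intervention. Your extra justifications (independence of $U_0$ and $\bm W$ for the conditioning step, measurability, and the implicit assumption $\sigma_u>0$) make explicit what the paper leaves unstated.
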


\begin{proof}
Since
\[
    \|\bm g_\lambda(x)\|^2
    =
    s(x)^2
    \big(
        \cos^2(\lambda\varphi(x))
        +
        \sin^2(\lambda\varphi(x))
    \big)
    =
    s(x)^2,
\]
we have
$\langle\bm g_\lambda(x),\bm W\rangle
\sim\mathcal N(0,s(x)^2)$,
which gives \eqref{eq:obs-law}.
Under $\mathrm{do}(X_0=\alpha)$,
\[
    X_1
    =
    a\alpha+\langle\bm g_\lambda(\alpha),\bm W\rangle
    \sim
    \mathcal N(a\alpha,s(\alpha)^2),
\]
again independently of $\lambda$.
\end{proof}

\paragraph{Backtracking semantics.}
Let the factual evidence be
\[
    e=\{X_0=x_0,\;X_1=x_1\},
    \qquad
    r:=x_1-ax_0.
\]
To construct a counterfactual world satisfying $X_0'=\alpha$, we use the
minimum-change exogenous coupling
\begin{equation}
    (U_0',\bm W')
    =
    (\alpha,\bm W).
    \label{eq:bt-coupling}
\end{equation}
Equivalently, \eqref{eq:bt-coupling} is the solution of
\[
    \min_{u_0',\bm w'}
    c_0|u_0'-U_0|^2+c_W\|\bm w'-\bm W\|^2
    \quad
    \text{s.t. }
    u_0'=\alpha,
\]
for any $c_0,c_W>0$.
Thus backtracking changes only the exogenous realization necessary to
produce $X_0'=\alpha$ while preserving the downstream noise $\bm W$.

\begin{theorem}[Backtracking counterfactual]
\label{thm:bt}
Under \eqref{eq:scm}--\eqref{eq:bt-coupling},
\begin{equation}
    X_1^*(\alpha)\mid e
    \sim
    \mathcal N
    \left(
        a\alpha+k_\lambda r,
        \tau_\lambda^2
    \right),
    \label{eq:bt-law}
\end{equation}
where
\begin{equation}
    k_\lambda
    :=
    \frac{s(\alpha)}{s(x_0)}
    \cos(\lambda\Delta\varphi),
    \qquad
    \tau_\lambda^2
    :=
    s(\alpha)^2
    \sin^2(\lambda\Delta\varphi),
    \label{eq:k-tau}
\end{equation}
and
\[
    \Delta\varphi
    :=
    \varphi(\alpha)-\varphi(x_0).
\]
Moreover, \eqref{eq:bt-law} is identical to the corresponding
abduction--action--prediction counterfactual.
\end{theorem}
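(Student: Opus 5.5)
Under the coupling \eqref{eq:bt-coupling} the counterfactual world keeps $\bm W$ and sets $X_0^*=\alpha$. Hence
\[
X_1^*(\alpha)=a\alpha+\langle \bm g_\lambda(\alpha),\bm W\rangle .
\]
The plan is therefore to compute the conditional law of the scalar $\langle \bm g_\lambda(\alpha),\bm W\rangle$ given the evidence $e$. Since $X_0=U_0$ is independent of $\bm W$, conditioning on $X_0=x_0$ leaves $\bm W\sim\mathcal N(\bm 0,I_2)$. Conditioning additionally on $X_1=x_1$ is then equivalent to conditioning on the single linear observation $\langle \bm g_\lambda(x_0),\bm W\rangle=r$.

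The core step is a standard Gaussian conditioning computation in $\mathbb R^2$. Write $\bm g_0:=\bm g_\lambda(x_0)$ and $\bm g_\alpha:=\bm g_\lambda(\alpha)$. The pair $(\langle\bm g_\alpha,\bm W\rangle,\langle\bm g_0,\bm W\rangle)$ is jointly Gaussian with:
\begin{itemize}
\item variances $\|\bm g_\alpha\|^2=s(\alpha)^2$ and $\|\bm g_0\|^2=s(x_0)^2$, as in the proof of Proposition~\ref{prop:obs-invariance};
\item covariance $\langle\bm g_\alpha,\bm g_0\rangle$.
\end{itemize}
Using the polar form \eqref{eq:g} and the identity $\cos A\cos B+\sin A\sin B=\cos(A-B)$, the covariance is
\[
\langle\bm g_\alpha,\bm g_0\rangle=s(\alpha)s(x_0)\cos(\lambda\Delta\varphi).
\]
The conditional mean is the covariance times $r$, divided by $s(x_0)^2$, which gives $k_\lambda r$ with $k_\lambda$ as in \eqref{eq:k-tau}. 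The conditional variance is
\[
s(\alpha)^2-s(\alpha)^2\cos^2(\lambda\Delta\varphi)=s(\alpha)^2\sin^2(\lambda\Delta\varphi)=\tau_\lambda^2 .
\]
Adding $a\alpha$ yields \eqref{eq:bt-law}.

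One technical point needs care: $s(x_0)>0$ must hold so that the conditioning is well defined. This follows if $\sigma_u>0$, which is implicitly assumed since $\varphi$ divides by $\sigma_u$. The degenerate case $\sin(\lambda\Delta\varphi)=0$ yields a point mass, and it is still covered by \eqref{eq:bt-law} read as a degenerate normal.

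For the second claim, run abduction--action--prediction in the two-variable SCM:
\begin{itemize}
\item \textbf{Abduction:} infer $P(U_0,\bm W\mid e)$, i.e. $U_0=x_0$ together with $\bm W$ conditioned on $\langle\bm g_0,\bm W\rangle=r$.
\item \textbf{Action:} apply $\mathrm{do}(X_0=\alpha)$, which replaces $U_0$'s role but keeps the abducted $\bm W$.
\item \textbf{Prediction:} compute $X_1=a\alpha+\langle\bm g_\alpha,\bm W\rangle$.
\end{itemize}
This is exactly the same random variable as in the backtracking construction, so the two laws coincide.

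The main obstacle is justifying rigorously that the backtracking posterior over $(U_0',\bm W')$ given $e$ is the pushforward of $P(\bm W\mid e)$ under \eqref{eq:bt-coupling}. This means the abduction over the joint $(U,U^*)$ reduces to abduction over $\bm W$ alone. I would state this directly from the deterministic coupling: $U^*$ is a measurable function of $(U,\alpha)$, so the marginalisation step of the abduction--marginalisation--prediction framework yields the pushforward. Once that is in place, everything else is the two-dimensional Gaussian computation above.
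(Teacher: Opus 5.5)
Your proposal is correct and follows essentially the same route as the paper. You reduce the evidence to the linear constraint $\langle \bm g_\lambda(x_0),\bm W\rangle = r$, obtain the mean and variance by Gaussian conditioning together with the identity $\langle \bm g_\lambda(\alpha),\bm g_\lambda(x_0)\rangle = s(\alpha)s(x_0)\cos(\lambda\Delta\varphi)$, and note that abduction--action--prediction evaluates the same random variable under the same posterior; the only difference is that the paper first writes the full posterior $\bm W\mid e\sim\mathcal N\bigl(\tfrac{r}{s(x_0)}\hat{\bm g}_0,\, I-\hat{\bm g}_0\hat{\bm g}_0^\top\bigr)$ and then projects onto $\bm g_\lambda(\alpha)$, which is equivalent to your direct conditioning of the scalar pair (and your remark that $\sigma_u>0$ is needed for $s(x_0)>0$ is a correct detail the paper leaves implicit).
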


\begin{proof}
The evidence fixes $U_0=x_0$ and imposes
\[
    \langle\bm g_\lambda(x_0),\bm W\rangle=r.
\]
With
$\hat{\bm g}_0:=\bm g_\lambda(x_0)/s(x_0)$,
Gaussian conditioning gives
\begin{equation}
    \bm W\mid e
    \sim
    \mathcal N
    \left(
        \frac{r}{s(x_0)}\hat{\bm g}_0,
        I-\hat{\bm g}_0\hat{\bm g}_0^\top
    \right).
    \label{eq:w-post}
\end{equation}
Backtracking changes $U_0$ to $\alpha$ while preserving $\bm W$, hence
\[
    X_1^*(\alpha)
    =
    a\alpha+\langle\bm g_\lambda(\alpha),\bm W\rangle.
\]
Therefore
\[
    \mathbb E[X_1^*(\alpha)\mid e]
    =
    a\alpha
    +
    \frac{r}{s(x_0)}
    \langle
        \bm g_\lambda(\alpha),
        \hat{\bm g}_0
    \rangle,
\]
and
\[
    \operatorname{Var}
    [X_1^*(\alpha)\mid e]
    =
    s(\alpha)^2
    -
    \frac{
        \langle
            \bm g_\lambda(\alpha),
            \bm g_\lambda(x_0)
        \rangle^2
    }{s(x_0)^2}.
\]
Since
\[
    \langle
        \bm g_\lambda(\alpha),
        \bm g_\lambda(x_0)
    \rangle
    =
    s(\alpha)s(x_0)
    \cos(\lambda\Delta\varphi),
\]
substitution yields \eqref{eq:bt-law}--\eqref{eq:k-tau}.
The interventional abduction--action--prediction construction evaluates the same
quantity
$a\alpha+\langle\bm g_\lambda(\alpha),\bm W\rangle$
under the same conditional law \eqref{eq:w-post}, so the two
counterfactual distributions coincide.
\end{proof}

\begin{corollary}[Cross-world non-identifiability]
\label{cor:nonid}
Although $\lambda$ is invisible to every observational and interventional
distribution, it changes the backtracking counterfactual whenever
$\lambda\mapsto(k_\lambda,\tau_\lambda)$ is non-constant.
In particular,
\[
    \tau_\lambda^2=0
    \iff
    \lambda\Delta\varphi\in\pi\mathbb Z.
\]
For $\lambda=0$,
\begin{equation}
    X_1^*(\alpha)\mid e
    =
    \delta_{
        a\alpha+
        \frac{s(\alpha)}{s(x_0)}r
    },
\end{equation}
whereas for $\lambda>0$ the counterfactual has positive conditional
variance for almost every $(x_0,\alpha)$ with $\alpha\neq x_0$.
\end{corollary}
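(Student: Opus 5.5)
The plan is to read everything off the closed form in Theorem~\ref{thm:bt}. That result gives $X_1^*(\alpha)\mid e\sim\mathcal N(a\alpha+k_\lambda r,\tau_\lambda^2)$, with $k_\lambda=\frac{s(\alpha)}{s(x_0)}\cos(\lambda\Delta\varphi)$ and $\tau_\lambda^2=s(\alpha)^2\sin^2(\lambda\Delta\varphi)$. Proposition~\ref{prop:obs-invariance} shows that the observational law and every interventional law $P_{\mathrm{do}(X_0=\alpha)}$ are the same for all $\lambda\in[0,1]$. So $\lambda$ is not identifiable from $\mathcal L_1$ or $\mathcal L_2$ information.

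A Gaussian is determined by its mean and variance. Fix $(x_0,x_1,\alpha)$. Whenever $\lambda\mapsto(a\alpha+k_\lambda r,\tau_\lambda^2)$ is non-constant, there exist $\lambda\neq\lambda'$ giving different backtracking counterfactual laws. Since $a\alpha$ and $r$ do not depend on $\lambda$, non-constancy of $(k_\lambda,\tau_\lambda)$ is enough when $r\neq 0$. When $r=0$, a change in $\tau_\lambda$ still changes the law. This gives the first claim.

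For the characterization of $\tau_\lambda^2=0$, note that $s(\alpha)=\sqrt{\sigma_u^2+\sigma_v^2\alpha^2}>0$ whenever $\sigma_u>0$. This positivity is implicitly needed anyway for $\varphi$ to be defined. Hence $\tau_\lambda^2=0$ iff $\sin(\lambda\Delta\varphi)=0$ iff $\lambda\Delta\varphi\in\pi\mathbb Z$.

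At $\lambda=0$ we get $\cos 0=1$ and $\sin 0=0$. The law therefore degenerates to the point mass at $a\alpha+\frac{s(\alpha)}{s(x_0)}r$. Here I read a zero-variance Gaussian as a Dirac measure. Equivalently, $\bm g_0(x)=s(x)(1,0)$, so $\langle \bm g_0(x_0),\bm W\rangle=r$ fixes $W_1=r/s(x_0)$ exactly, and $X_1^*$ depends only on $W_1$.

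The main step needing care is the claim for $\lambda>0$. The function $\varphi=\arctan(\sigma_v x/\sigma_u)$ is strictly monotone when $\sigma_v\neq0$ and takes values in $(-\pi/2,\pi/2)$. Thus for $\alpha\neq x_0$ we have $0<|\Delta\varphi|<\pi$. With $\lambda\in(0,1]$ this gives $0<|\lambda\Delta\varphi|<\pi$, so $\lambda\Delta\varphi\notin\pi\mathbb Z$ and $\tau_\lambda^2>0$.

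This argument actually yields the statement for every $\alpha\neq x_0$, which is stronger than ``almost every''. I would note the side conditions $\sigma_u>0$ and $\sigma_v\neq0$ explicitly. If $\sigma_v=0$, then $\varphi\equiv0$, the counterfactual is constant in $\lambda$, and the exceptional set in the statement is the whole space. This is consistent with the hedge ``whenever non-constant''.

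Finally, I would tie this back to the recovery discussion in Section~\ref{sec:method}. Two values of $\lambda$ induce identical contexts $O$ but different targets. So a CFGNP trained with a prior over $\lambda$ can at best recover the mixture over $\lambda$, which is the model-averaged distribution described there.
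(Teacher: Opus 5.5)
Your proposal is correct and follows the route the paper intends. The paper gives no separate proof and reads the corollary straight off the closed form in Theorem~\ref{thm:bt} and the $\lambda$-invariance in Proposition~\ref{prop:obs-invariance}; your strengthening to every $\alpha\neq x_0$ under $\sigma_u,\sigma_v\neq0$ is valid, and the only spot to tighten is the $r=0$ case, where you should add that non-constancy of $(k_\lambda,\tau_\lambda)$ forces $\Delta\varphi\neq0$, hence $\tau_0^2=0<\tau_\lambda^2$ for small $\lambda>0$, so $\tau_\lambda$ itself is non-constant.
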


For $\lambda=1$,
\[
    \bm g_1(x)
    =
    (\sigma_u,\sigma_vx),
\]
and
\begin{equation}
    k_1
    =
    \frac{
        \sigma_u^2+\sigma_v^2x_0\alpha
    }{
        s(x_0)^2
    },
    \qquad
    \tau_1^2
    =
    \frac{
        \sigma_u^2\sigma_v^2(\alpha-x_0)^2
    }{
        s(x_0)^2
    }.
\end{equation}

Use the prior
\[
    a\sim\mathcal N(0,\sigma_a^2),
    \qquad
    \lambda\sim\pi_\lambda,
    \qquad
    a\perp\lambda,
\]
and let
\[
    D_{\mathrm{obs}}
    =
    \{
        (x_0^{(m)},x_1^{(m)})
    \}_{m=1}^M
\]
be observational context data generated from \eqref{eq:obs-law}.
Writing
\[
    \mathcal E
    :=
    D_{\mathrm{obs}}\cup\{(x_0,x_1)\},
\]
we get for the optimal backtracking predictive
\begin{equation}
    p_{\mathrm{BCM}}
    (y\mid x_0,x_1,X_0'=\alpha,D_{\mathrm{obs}})
    =
    \int
    p_{\mathrm{bt}}
    (y\mid e,\alpha,a,\lambda)
    p(a,\lambda\mid\mathcal E)
    \,\mathrm da\,\mathrm d\lambda.
    \label{eq:bcm}
\end{equation}

\begin{proposition}[Exact posterior and model average]
\label{prop:bcm}
The likelihood is independent of $\lambda$, so
\[
    p(a,\lambda\mid\mathcal E)
    =
    p(a\mid\mathcal E)\pi_\lambda(\lambda).
\]
With
\[
    w_i:=s(x_0^{(i)})^{-2},
\]
we have
\begin{equation}
    a\mid\mathcal E
    \sim
    \mathcal N
    \left(
        \mu_a,
        \sigma^2_{a\mid\mathcal E}
    \right),
\end{equation}
where
\begin{equation}
    \sigma^{-2}_{a\mid\mathcal E}
    =
    \sigma_a^{-2}
    +
    \sum_{i\in\mathcal E}
    w_i(x_0^{(i)})^2,
    \qquad
    \mu_a
    =
    \sigma^2_{a\mid\mathcal E}
    \sum_{i\in\mathcal E}
    w_i x_0^{(i)}x_1^{(i)}.
    \label{eq:a-post}
\end{equation}
Consequently,
\begin{equation}
    p_{\mathrm{BCM}}(y\mid\cdot)
    =
    \int
    \mathcal N
    \left(
        y;
        \mu_a\beta_\lambda+k_\lambda x_1,
        \tau_\lambda^2
        +
        \sigma^2_{a\mid\mathcal E}\beta_\lambda^2
    \right)
    \pi_\lambda(\mathrm d\lambda),
    \label{eq:bcm-closed}
\end{equation}
where
\[
    \beta_\lambda
    :=
    \alpha-k_\lambda x_0.
\]
\end{proposition}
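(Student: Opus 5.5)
The plan is to prove the three claims in order: the factorization of the posterior, the Gaussian form of $a\mid\mathcal E$, and the closed form \eqref{eq:bcm-closed}. Each step reduces to an earlier result or to a standard Gaussian identity.

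\textbf{Factorization.} First I write the joint likelihood of $\mathcal E=D_{\mathrm{obs}}\cup\{(x_0,x_1)\}$ given $(a,\lambda)$. The context samples are i.i.d. from the SCM, and the factual unit $(x_0,x_1)$ is one more draw from it. So the likelihood is the product over $i\in\mathcal E$ of $\mathcal N(x_0^{(i)};0,1)\,\mathcal N(x_1^{(i)};a x_0^{(i)},s(x_0^{(i)})^2)$. By Proposition~\ref{prop:obs-invariance}, each factor is independent of $\lambda$. Bayes' rule with the independent prior $p(a)\pi_\lambda(\lambda)$ then gives $p(a,\lambda\mid\mathcal E)\propto p(a)\prod_i\mathcal N(x_1^{(i)};ax_0^{(i)},s(x_0^{(i)})^2)\cdot\pi_\lambda(\lambda)$. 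After normalizing, this is $p(a\mid\mathcal E)\pi_\lambda(\lambda)$: the observed data carry no information about the cross-world coupling.

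\textbf{Posterior of $a$.} The $a$-dependent part is a Bayesian linear regression through the origin. The noise variances $s(x_0^{(i)})^2$ are known and do not involve $a$. Completing the square in $a$, the log-posterior is
\[
-\tfrac12\Bigl(\sigma_a^{-2}+\textstyle\sum_i w_i (x_0^{(i)})^2\Bigr)a^2+a\textstyle\sum_i w_i x_0^{(i)}x_1^{(i)}+\text{const},
\]
with $w_i=s(x_0^{(i)})^{-2}$. Reading off the precision and the mean gives \eqref{eq:a-post} directly.

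\textbf{Model average.} For fixed $(a,\lambda)$, Theorem~\ref{thm:bt} gives $p_{\mathrm{bt}}(y\mid e,\alpha,a,\lambda)=\mathcal N(y;a\alpha+k_\lambda r,\tau_\lambda^2)$ with $r=x_1-ax_0$. The key observation is that $k_\lambda$ and $\tau_\lambda$ depend only on $(x_0,\alpha,\lambda,\sigma_u,\sigma_v)$ and not on $a$. The mean is therefore affine in $a$:
\[
a\alpha+k_\lambda(x_1-ax_0)=a\beta_\lambda+k_\lambda x_1 .
\]
Substituting the factorized posterior into \eqref{eq:bcm}, I first integrate over $a$ with $\lambda$ fixed. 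By the linear-Gaussian marginalization identity, if $a\sim\mathcal N(\mu_a,\sigma^2_{a\mid\mathcal E})$ and $y\mid a\sim\mathcal N(\beta a+c,\tau^2)$, then $y\sim\mathcal N(\beta\mu_a+c,\tau^2+\beta^2\sigma^2_{a\mid\mathcal E})$. This yields the integrand of \eqref{eq:bcm-closed}, and the remaining integral over $\pi_\lambda(\mathrm d\lambda)$ is exactly the stated one. Fubini is justified because all integrands are nonnegative.

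\textbf{Main obstacle.} I expect the delicate step to be the first one, namely justifying that the evidence $e$ used inside $p_{\mathrm{bt}}$ and the evidence in $\mathcal E$ do not double count, and that conditioning on the factual unit does not induce dependence on $\lambda$. I would argue this by writing the full joint density over $(a,\lambda,\bm W,\mathcal E)$. Conditioning on $\mathcal E$ first gives the $\lambda$-free marginal above, and the conditional law of $\bm W$ given $(e,a,\lambda)$ is then the one used in Theorem~\ref{thm:bt}. The degenerate case $\tau_\lambda=0$ (Corollary~\ref{cor:nonid}) needs a short remark: treat the Gaussian as a Dirac measure. The marginalization over $a$ still gives a proper Gaussian whenever $\beta_\lambda\neq0$, and the formula holds in the sense of measures otherwise.
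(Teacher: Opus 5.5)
Your proposal is correct and follows the same route as the paper's proof. Both arguments run through the same steps: the posterior factorizes by Proposition~\ref{prop:obs-invariance}, the posterior for $a$ is the weighted Bayesian linear-regression posterior, the mean in \eqref{eq:bt-law} is rewritten as $a\beta_\lambda+k_\lambda x_1$, and a linear-Gaussian marginalization over $a$ finishes it. You also fill in details the paper leaves implicit: that the context noise is independent of the factual $\bm W$ given $(a,\lambda)$, so $e$ is not double counted, and the degenerate case $\tau_\lambda=\beta_\lambda=0$.
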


\begin{proof}
Proposition~\ref{prop:obs-invariance} implies that the likelihood does not
depend on $\lambda$, which yields the posterior factorization.
\eqref{eq:a-post} is the Gaussian posterior for weighted linear
regression.
Finally, substituting
$r=x_1-ax_0$
into \eqref{eq:bt-law} gives
\[
    a\alpha+k_\lambda r
    =
    a(\alpha-k_\lambda x_0)+k_\lambda x_1
    =
    a\beta_\lambda+k_\lambda x_1.
\]
Marginalizing the Gaussian posterior over $a$ gives
\eqref{eq:bcm-closed}.
\end{proof}

\paragraph{Identifiable versus non-identifiable regimes.}
Let
\[
    p_{\mathrm{BCM}}^*
    :=
    p_{\mathrm{bt}}
    (\cdot\mid e,\alpha,a^*,\lambda^*)
\]
denote the backtracking counterfactual under the true mechanism.
As $M\to\infty$,
\[
    \sigma^2_{a\mid\mathcal E}
    =
    \Theta(M^{-1})
    \to0,
    \qquad
    \mu_a\to a^*
    \quad\text{a.s.},
\]
and therefore
\begin{equation}
    p_{\mathrm{BCM}}
    \Longrightarrow
    \int
    \mathcal N
    \left(
        y;
        a^*\alpha+k_\lambda r^*,
        \tau_\lambda^2
    \right)
    \pi_\lambda(\mathrm d\lambda),
    \qquad
    r^*:=x_1-a^*x_0.
    \label{eq:limit}
\end{equation}

If $\pi_\lambda=\delta_{\lambda^*}$, the limit in \eqref{eq:limit} equals
$p_{\mathrm{BCM}}^*$.
If instead $\pi_\lambda$ is non-degenerate, then for almost every
$(x_0,\alpha)$ with $\alpha\neq x_0$ the map
\[
    \lambda
    \mapsto
    (k_\lambda,\tau_\lambda)
\]
is non-constant, so the limiting Bayes predictor remains a genuine mixture
over observationally indistinguishable cross-world mechanisms.

Thus even infinite observational data identifies $a$ but not $\lambda$:
uncertainty about the observational mechanism vanishes, while uncertainty
about the backtracking counterfactual need not.

%% file: appendix/additional_results.tex
\section{Additional Results}

\subsection{Additional CheXpert Results}\label{add_results_chexpert}
\subsubsection{CheXpert Experiments}
We include some counterfactual images generated by the model.
\begin{figure}[H]
    \centering
    \includegraphics[width=0.8\linewidth]{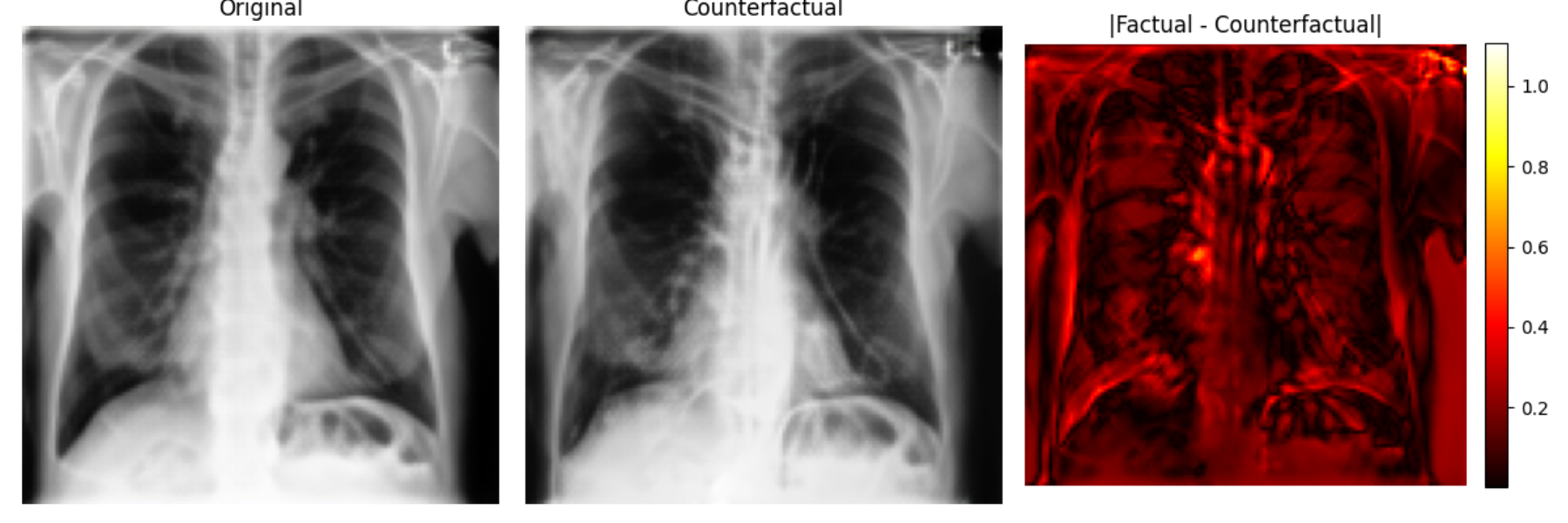}
    \caption{Example counterfactual of a CheXpert image. Observed factual unit (left) has age $80$ and the intervention value is $58$ in the counterfactual (middle). The difference between the pixels is on the right.}
    \label{fig:ch_age_example}
\end{figure}

\begin{figure}[H]
    \centering
    \includegraphics[width=0.8\linewidth]{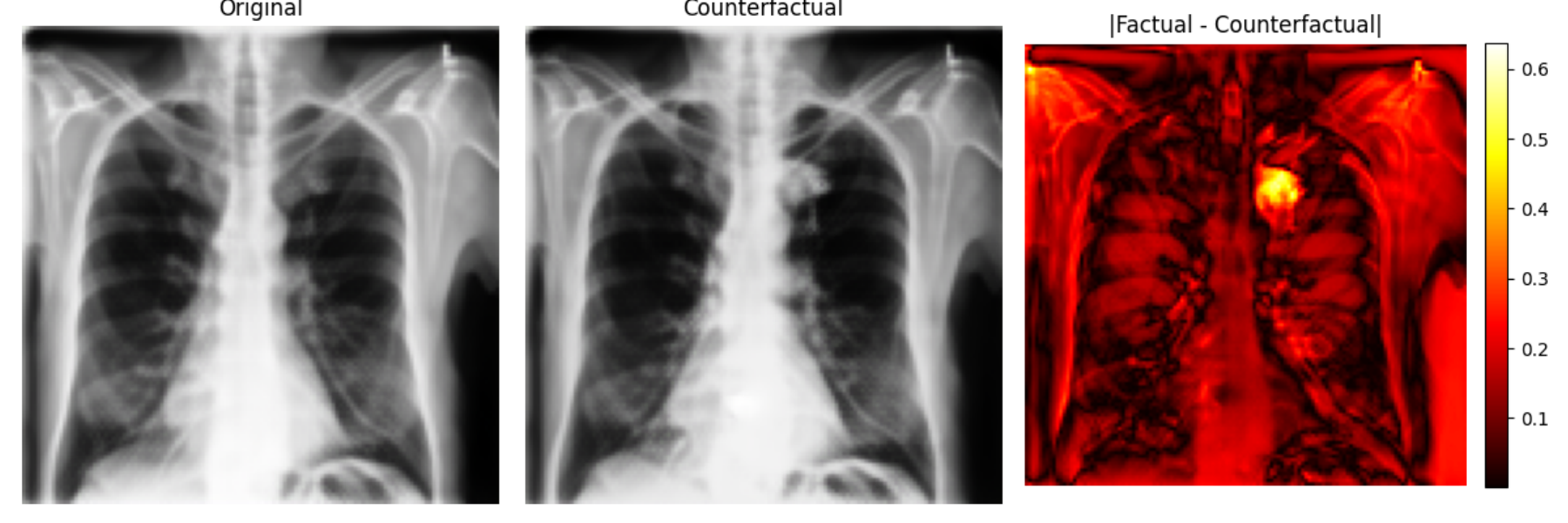}
    \caption{Example counterfactual of a CheXpert image. Observed factual unit (left) has AP/PA as $1$ and the intervention value is $0$ in the counterfactual (middle). The difference between the pixels is on the right.}
    \label{fig:ch_appa_example}
\end{figure}

\begin{figure}[H]
    \centering
    \includegraphics[width=0.8\linewidth]{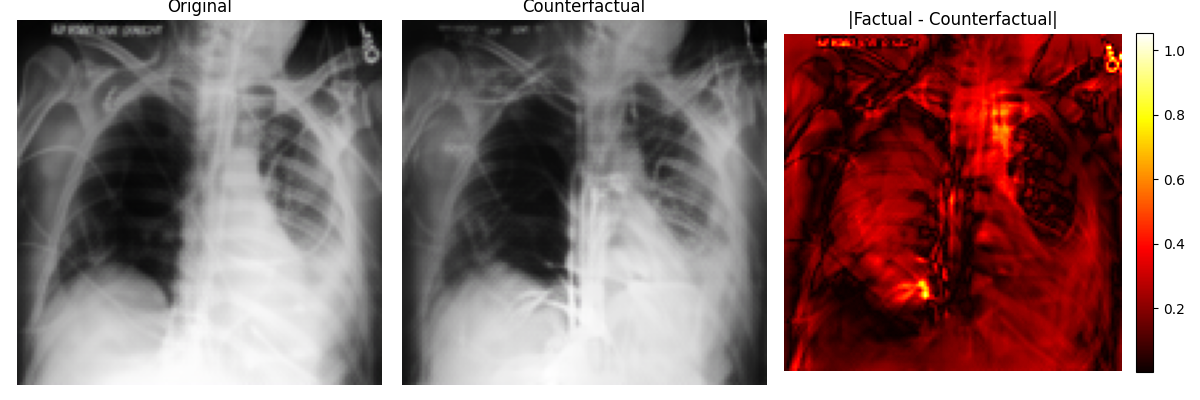}
    \caption{An image where the observed factual unit of age is $74$ and the counterfactual intervention is $85$.}
    \label{fig:normal_age}
\end{figure}

\subsubsection{CheXpert Out-of-Distribution Experiment}
This model was trained with sex chosen to be the out-of-distribution feature. However, during inference the sex feature is included again to test the performance.

\paragraph{Out-of-Distribution Feature Sex.}
We here include the results when the model was trained with OOD feature sex.
\begin{figure}[H]
    \centering
    \includegraphics[width=0.8\linewidth]{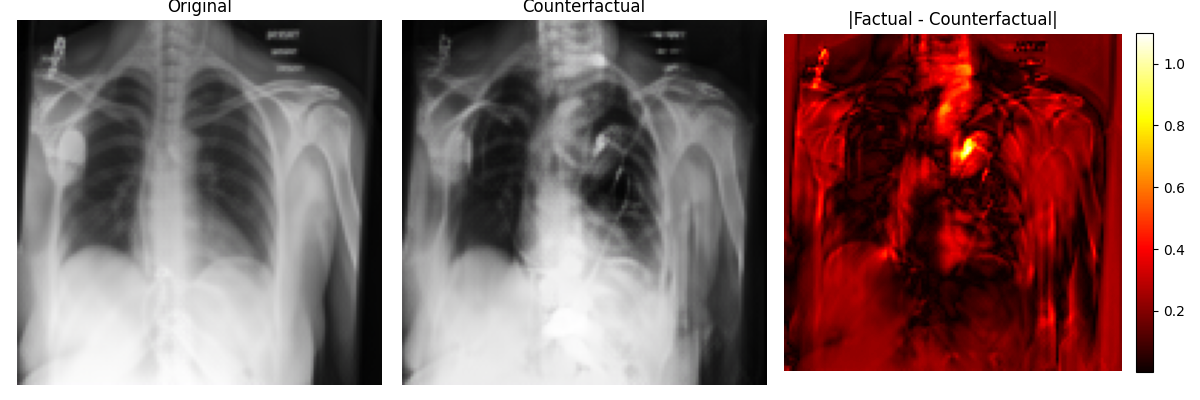}
    \caption{An image where the observed factual unit of age is $65$ and the counterfactual intervention is $72$. The model was trained with OOD index sex.}
    \label{fig:ood0_age}
\end{figure}
\begin{figure}[H]
    \centering
    \includegraphics[width=0.8\linewidth]{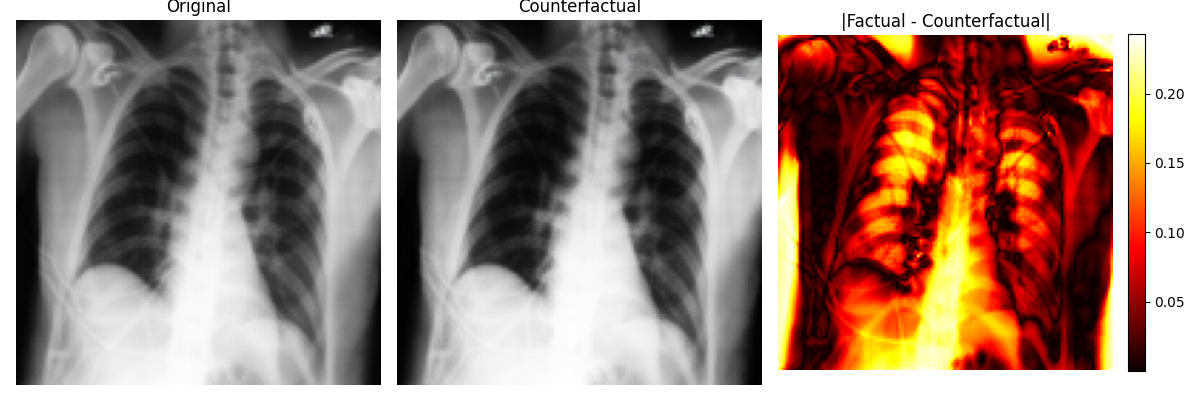}
    \caption{An image where the observed factual unit of AP/PA is 0 and the counterfactual intervention is 1. The model was trained with OOD index sex.}
    \label{fig:ood0_appa}
\end{figure}

\paragraph{Out-of-Distribution Feature Age.}
We here include the results when the model was trained with OOD feature age.

\begin{figure}[H]
    \centering
    \includegraphics[width=0.8\linewidth]{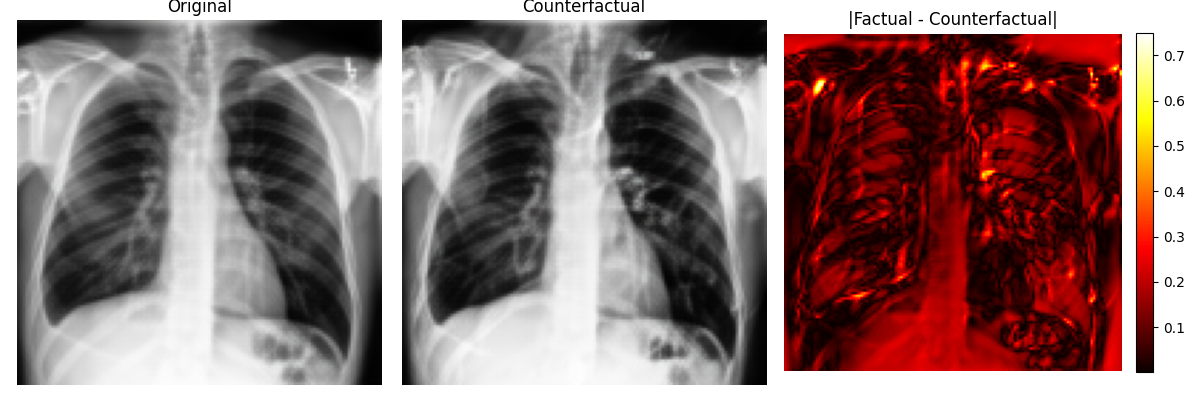}
    \caption{An image where the observed factual unit is age $30$ and the counterfactual intervention is age $52$. The model was trained with OOD index age.}
    \label{fig:ood1_age}
\end{figure}

\begin{figure}[H]
    \centering
    \includegraphics[width=0.8\linewidth]{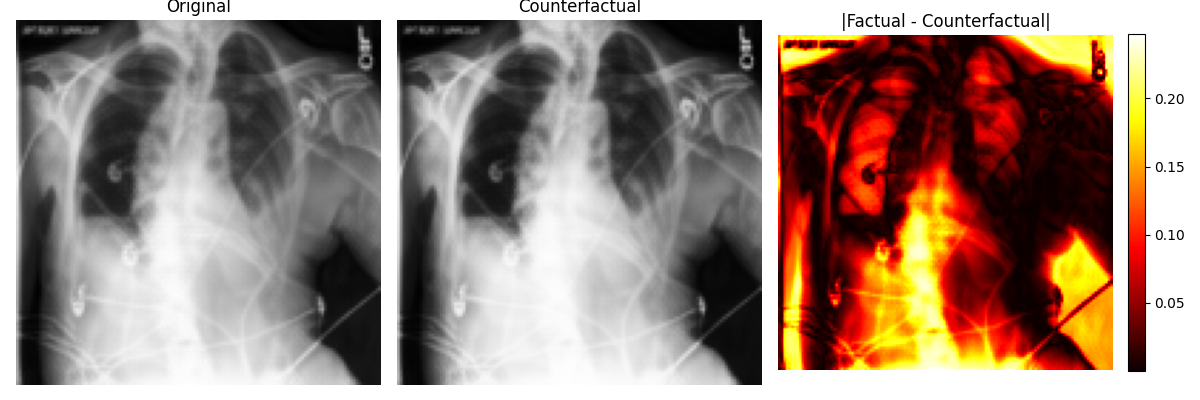}
    \caption{An image where the observed factual unit of sex is 1 and the counterfactual intervention is 0. The model was trained with OOD index age.}
    \label{fig:ood1_sed}
\end{figure}

\paragraph{Out-of-Distribution Feature AP/PA.}
We here include the results when the model was trained with OOD feature AP/PA.

\begin{figure}[H]
    \centering
    \includegraphics[width=0.8\linewidth]{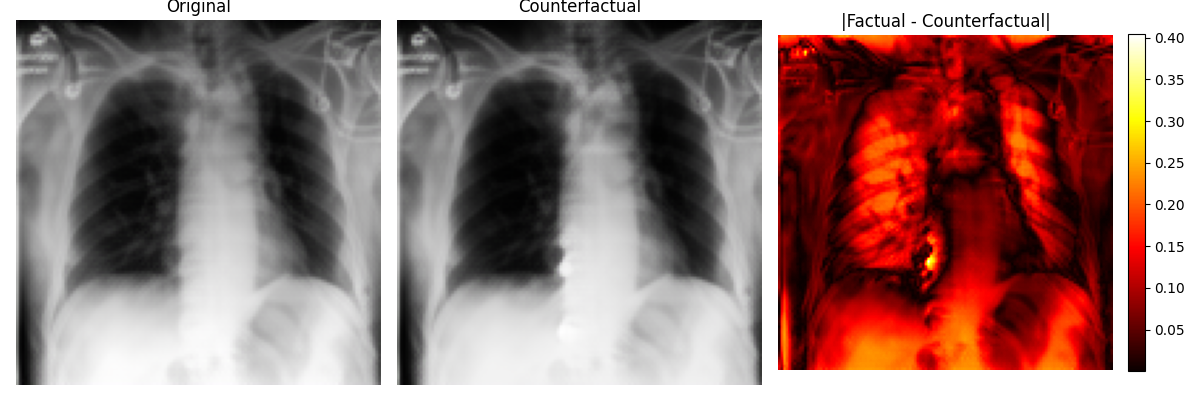}
    \caption{n image where the observed factual unit of sex is 0 and the counterfactual intervention is 1. The model was trained with OOD index AP/PA.}
    \label{fig:ood2_sex}
\end{figure}

\begin{figure}[H]
    \centering
    \includegraphics[width=0.8\linewidth]{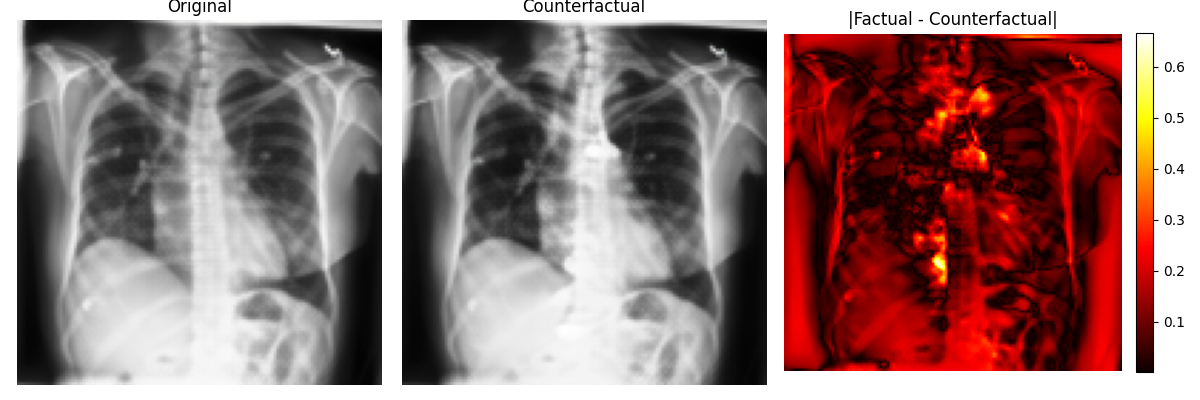}
    \caption{An image where the observed factual unit is age $26$ and the counterfactual intervention is age $64$. The model was trained with OOD index AP/PA.}
    \label{fig:ood2_age}
\end{figure}

\paragraph{Out-of-Distribution Feature Disease.}
We here include the results when the model was trained with OOD feature disease.

\begin{figure}[H]
    \centering
    \includegraphics[width=0.8\linewidth]{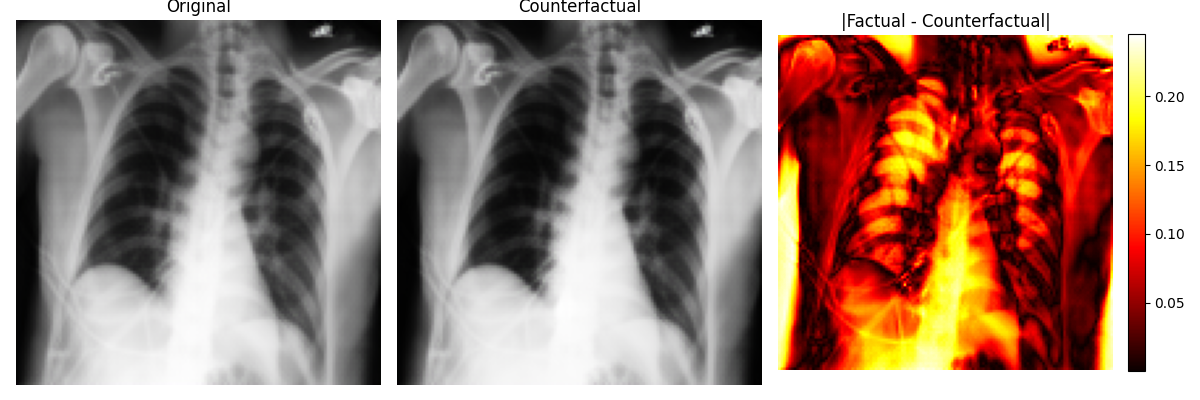}
    \caption{An image where the observed factual unit of AP/PA is 0 and the counterfactual intervention is 1. The model was trained with OOD index disease.}
    \label{fig:ood3_appa}
\end{figure}

\begin{figure}[H]
    \centering
    \includegraphics[width=0.8\linewidth]{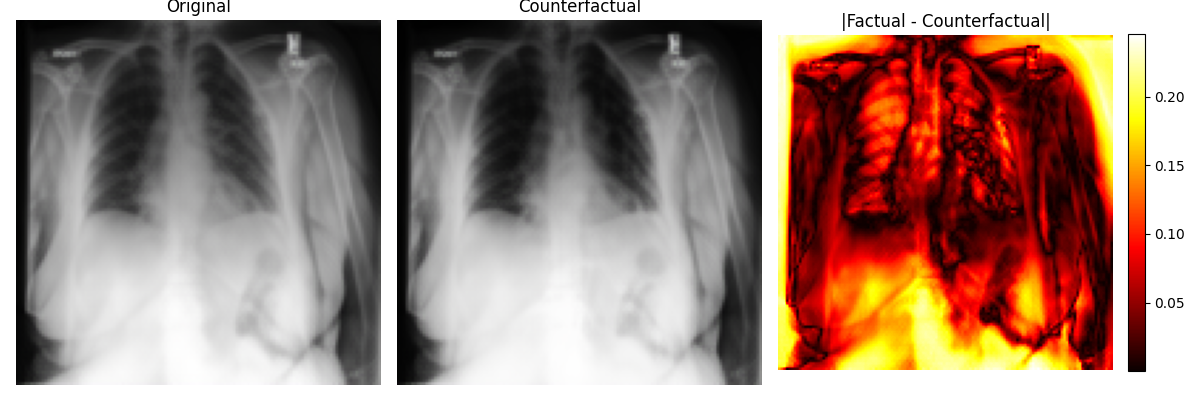}
    \caption{An image where the observed factual unit is age $67$ and the counterfactual intervention is age $37$. The model was trained with OOD index disease.}
    \label{fig:ood3_age}
\end{figure}

\input{appendix/size_results}

\input{appendix/runtime_appendix}

\input{appendix/two_node_results}

\input{appendix/ablation_appendix}

%% file: appendix/size_results.tex
\subsection{Graph Size Results}\label{addition_size_results}

\begin{figure}[H]
    \centering
    \includegraphics[width=0.48\linewidth]{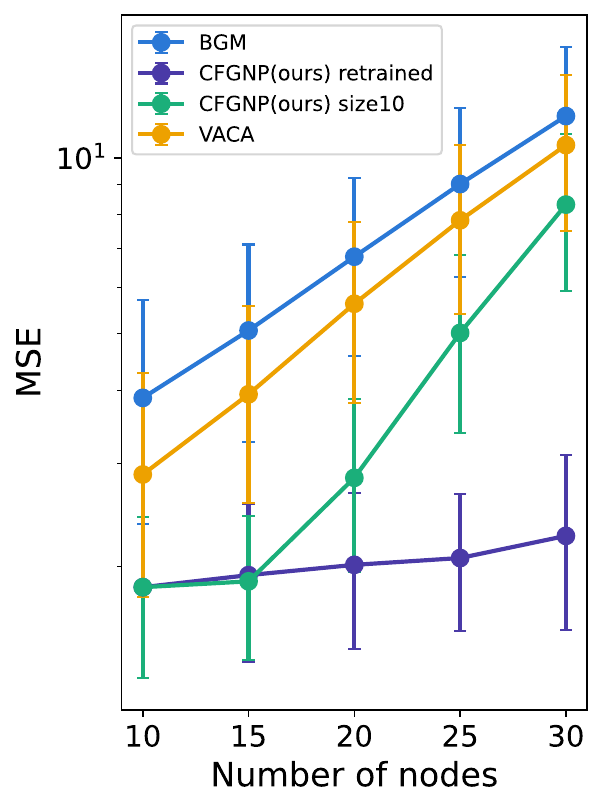}
    \hfill
    \includegraphics[width=0.48\linewidth]{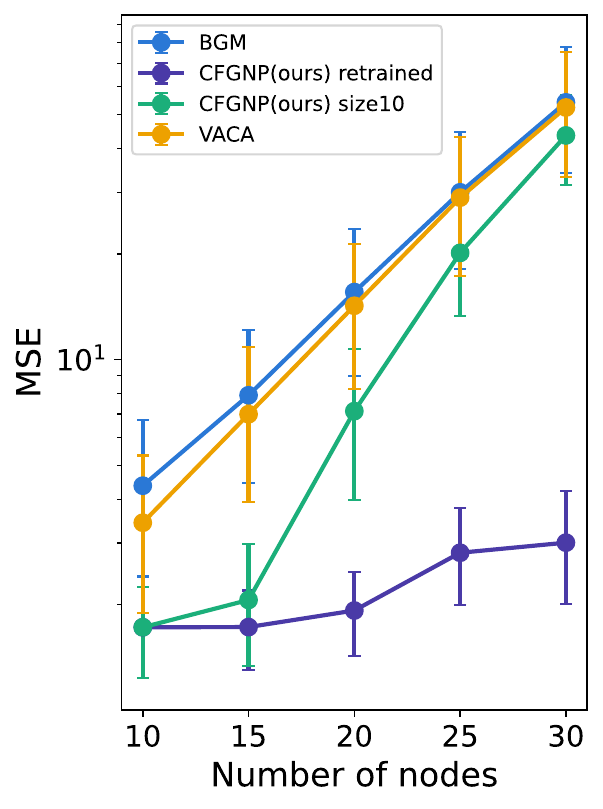}
    \caption{Performance (MSE) of models with regard to the size of the causal graph for sqrt-sum (left) and sum-sqrt (right) structural equations. The MSE is averaged over distinct causal graphs. Vertical lines show the 25th and 75th quantiles.}
    \label{fig:size-sqrt}
\end{figure}

\begin{figure}[H]
    \centering
    \includegraphics[width=0.48\linewidth]{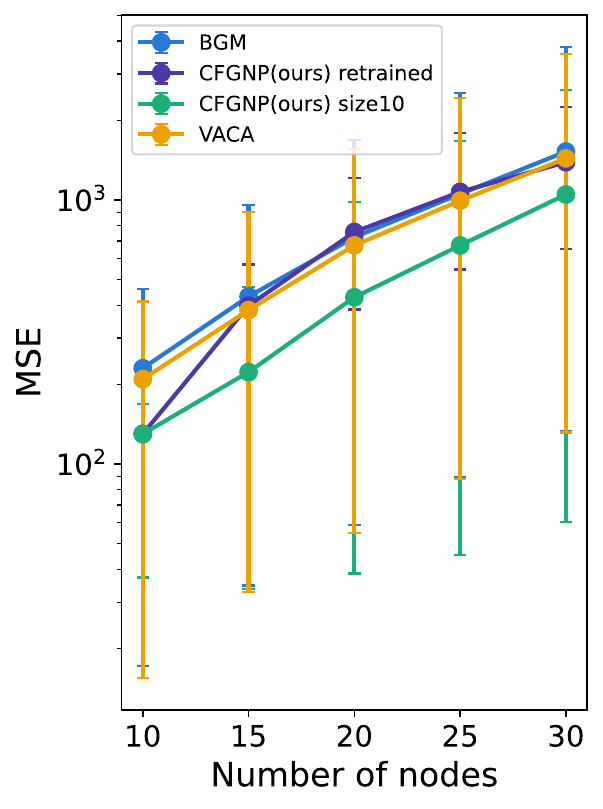}
    \hfill
    \includegraphics[width=0.48\linewidth]{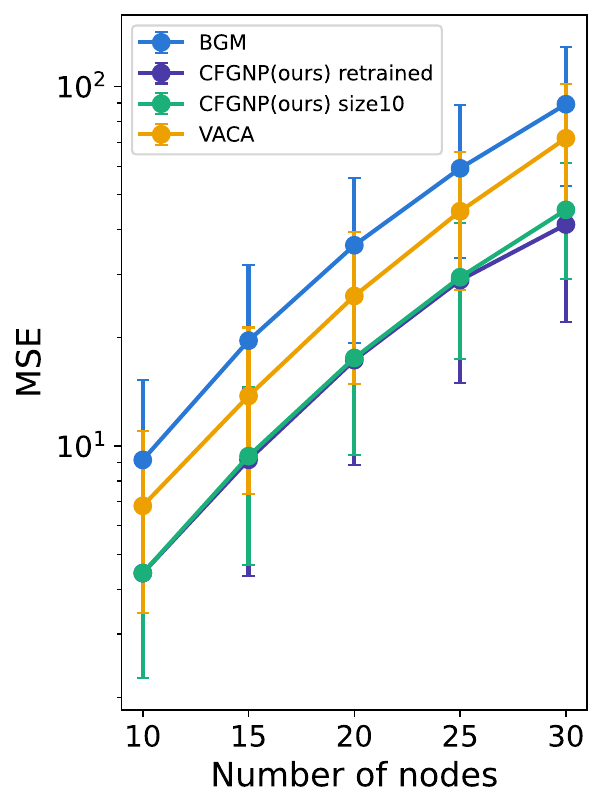}
    \caption{Performance (MSE) of models with regard to the size of the causal graph for geometric-mean (left) and sum-tanh (right) structural equations. The MSE is averaged over distinct causal graphs. Vertical lines show the 25th and 75th quantiles.}
    \label{fig:size-geometric-tanh}
\end{figure}

%% file: appendix/runtime_appendix.tex
\subsection{Runtime of Experiments}\label{app:runtime_exp}
As before, we use a batch size of $128$ for the (semi-)synthetic experiments and a batch size of $32$ for the CheXpert dataset. All experiments are run on an NVIDIA H100 GPU.

\begin{table}[H]
    \centering
    \begin{tabular}{lrrrr}
        \toprule
        \textbf{Dataset} & \textbf{CFGNP (Ours)} & \textbf{BGM} & \textbf{CFM} & \textbf{VACA} \\
        \midrule
        German Loan     & 31.5719s & 5.2537s  & 115.1188s & 5.3015s \\
        Triangle LIN    &  8.1853s & 2.9878s  &  26.3716s & 3.7258s \\
        Triangle NLIN   &  7.9175s & 3.0323s  &  26.2195s & 3.7365s \\
        Triangle NADD   &  7.9084s & 3.0364s  &  26.3470s & 3.7963s \\
        M-Shape LIN     & 12.9088s & 3.3865s  &  41.5352s & 3.8361s \\
        M-Shape NLIN    & 12.9274s & 3.3941s  &  41.8969s & 3.8385s \\
        M-Shape NADD    & 12.9784s & 3.4396s  &  41.9043s & 3.8209s \\
        \midrule
        Artificial Size 10   & 18.4545s & 3.8820s  & -- & 2.4226s \\
        Artificial Size 15   & 27.6381s & 5.3534s  & -- & 2.6127s \\
        Artificial Size 20   & 39.2469s & 9.2543s  & -- & 2.6588s \\
        Artificial Size 25   & 54.7471s & 8.5072s  & -- & 2.5591s \\
        Artificial Size 30   & 74.2878s & 13.9569s & -- & 2.7384s \\
        \bottomrule
    \end{tabular}
    \caption{Runtime comparison of CFGNP (Ours), BGM, CFM, and VACA across (semi-) synthetic datasets. The datasets were used throughout the results section.}
    \label{tab:runtime_comparison}
\end{table}
\begin{table}[H]
    \centering
    \begin{tabular}{lrrr}
        \toprule
        \textbf{Dataset} & \textbf{CFGNP (Ours)} & \textbf{VCI} & \textbf{DeepBC} \\
        \midrule
        CheXpert & 47.7369s & 3.8506s & 6032.4057s \\
        \bottomrule
    \end{tabular}
    \caption{Runtime comparison of CFGNP (Ours), VCI, and DeepBC on the CheXpert dataset.}
    \label{tab:chexpert_runtime}
\end{table}

%% file: appendix/two_node_results.tex
\subsection{Convergence in (Non-)Identifiable Cases}
\label{sec:id_experiments}

We empirically evaluate the convergence behaviour of CFGNP.

To this end, we evaluate CFGNP on an SCM for which the ground-truth counterfactual distribution can be computed analytically. We consider a causal graph with two nodes and at most one edge. In Appendix~\ref{sec:id_experiments_proof}, we show that, under certain choices of parameters, such an SCM is identifiable. Furthermore, we analytically compute the ground-truth counterfactual distribution
\[
p_T^* := p_T(Y^* = y^* \mid z, T^*=t^*, \mathbf{f}, \mathcal{G}),
\]
given the true causal graph and structural mechanisms.

We compare the distribution predicted by CFGNP to both $p_T^*$ and
\[
p_T := p_T(Y^* = y^* \mid O, z, T^*=t^*),
\]
which conditions only on the observations rather than on knowledge of the SCM $(\mathbf{f},\mathcal{G})$.

Subsequently, we repeat the experiment for a non-identifiable parameterization of the two-node SCM.

In both cases, we use $40000$ samples, split into $21000$ train samples, $9000$ validation samples, and $10000$ test samples.

\begin{figure}[H]
    \centering
    \includegraphics[width=\linewidth]{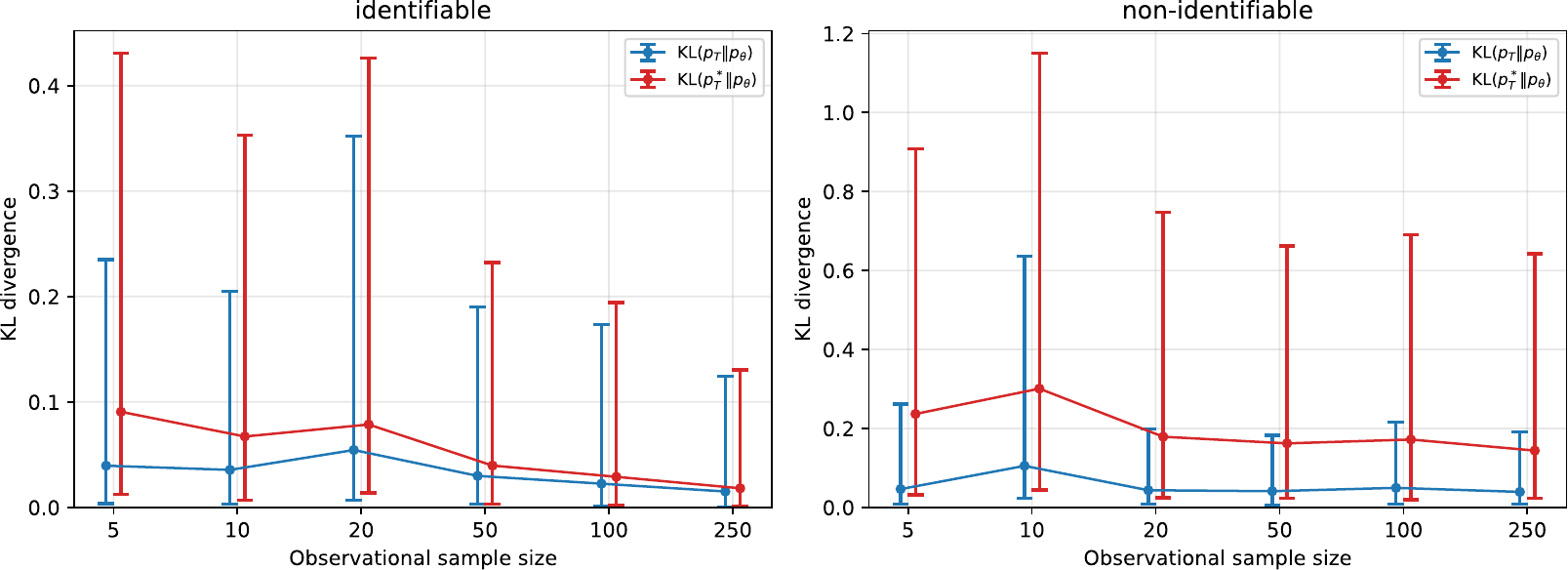}
    \caption{KL divergences for the two-node experiments across different numbers $N_{\mathrm{obs}}$ of observations supplied to each sample. Vertical lines show the 10th and 90th quantiles. Results are shown for the identifiable (left) and non-identifiable (right) settings.}
    \label{fig:id_experiments}
\end{figure}

%% file: appendix/ablation_appendix.tex
\subsection{Transformer Ablation}
\label{app:transformer_ablation}

To assess the importance of the CFGNP architecture, we consider a
Transformer-based ablation that replaces the core CFGNP processing mechanism
with a standard Transformer encoder while retaining the same input and output
representations. The observation, factual unit, and
counterfactual/intervention inputs are first embedded separately using the same
embedding dimensionality as CFGNP. Again, eparate embeddings are retained for
intervened and non-intervened variables. 
Then, we concatenate all these vectors and run a transformer architecture on them.
The output of the transformer for $x^*$ is handled in the same way as for CFGNP

Unless stated otherwise, all hyperparameters are kept identical to those used
for CFGNP, including the base hidden dimension, number of attention heads,
number of mixture components, and input/output dimensionality. As in CFGNP,
the effective embedding dimension is given by the base hidden dimension
multiplied by the number of attention heads. The Transformer-specific
hyperparameters are five encoder layers, a feed-forward dimension of four times
the embedding dimension, a dropout rate of $0.1$, and GELU activations.
Pre-layer normalization is used within each Transformer encoder layer, followed
by a final layer normalization after the encoder stack.

Table~\ref{tab:transformer_ablation} compares the resulting Transformer
ablation against the full CFGNP model.

\begin{table}[h]
    \centering
    \resizebox{0.7\linewidth}{!}{%
    \begin{tabular}{llcc}
        \toprule
        \textbf{Dataset} & \textbf{Variant}
        & \textbf{CFGNP (Ours)} & \textbf{Transformer} \\
        & & \multicolumn{2}{c}{\textbf{MSE} $\downarrow$} \\
        \hline

        \multirow{3}{*}{Triangle}
            & LIN
            & $\mathbf{(2.59 \pm 2.13) \times 10^{-7}}$
            & $(8.33 \pm 0.25) \times 10^{-4}$ \\

            & NLIN
            & $\mathbf{(2.14 \pm 0.34) \times 10^{-7}}$
            & $(8.37 \pm 0.46) \times 10^{-4}$ \\

            & NADD
            & $\mathbf{(2.81 \pm 1.11) \times 10^{-7}}$
            & $(5.02 \pm 0.03) \times 10^{-4}$ \\

        \hline
        \addlinespace

        \multirow{3}{*}{M-shape}
            & LIN
            & $\mathbf{(3.60 \pm 7.00) \times 10^{-5}}$
            & $(4.79 \pm 0.12) \times 10^{-3}$ \\

            & NLIN
            & $\mathbf{(2.50 \pm 0.47) \times 10^{-6}}$
            & $(6.04 \pm 0.19) \times 10^{-3}$ \\

            & NADD
            & $\mathbf{(1.08 \pm 0.35) \times 10^{-4}}$
            & $(3.04 \pm 0.17) \times 10^{-3}$ \\

        \hline
        \addlinespace

        German Loan
            & --
            & $\mathbf{(0.19 \pm 0.09) \times 10^{-2}}$
            & $(46.92 \pm 7.72) \times 10^{-2}$ \\

        \bottomrule
    \end{tabular}%
    }
    \caption{Mean MSE $\pm$ standard deviation of the full CFGNP model and the
    Transformer ablation across the Triangle, M-shape, and German Loan datasets.
    Lower values indicate better performance. Best scores are marked in bold.}
    \label{tab:transformer_ablation}
    \vspace{-10pt}
\end{table}